\documentclass{article}

\usepackage{microtype}
\usepackage{graphicx}
\usepackage{subcaption}
\usepackage{booktabs} 

\usepackage{hyperref}

\usepackage[preprint]{icml2026}

\usepackage{amsmath}
\usepackage{amssymb}
\usepackage{mathtools}
\usepackage{amsthm}
\usepackage{wrapfig}

\usepackage{multirow}
\usepackage[linesnumbered,ruled,vlined,algo2e]{algorithm2e}
\usepackage[capitalize,noabbrev]{cleveref}

\theoremstyle{plain}
\newtheorem{theorem}{Theorem}[section]
\newtheorem{proposition}[theorem]{Proposition}

\newtheorem{corollary}[theorem]{Corollary}
\theoremstyle{definition}

\theoremstyle{remark}

\usepackage[textsize=tiny]{todonotes}

\icmltitlerunning{Neural Proposals, Symbolic Guarantees: Neuro-Symbolic Graph Generation with Hard Constraints}

\begin{document}

\twocolumn[

  \icmltitle{
Neural Proposals, Symbolic Guarantees: Neuro-Symbolic Graph Generation with Hard Constraints
  }



  \icmlsetsymbol{equal}{*}

  \begin{icmlauthorlist}
    \icmlauthor{Chuqin Geng}{McGill,UofT}
    \icmlauthor{Li Zhang}{UofT}
    \icmlauthor{Mark Zhang}{UofT}
    \icmlauthor{Haolin Ye}{McGill}
    \icmlauthor{Ziyu Zhao}{McGill}
    \icmlauthor{Xujie Si}{UofT}
  \end{icmlauthorlist}

  \icmlaffiliation{McGill}{
    School of Computer Science, McGill University, Montreal, Canada
  }

  \icmlaffiliation{UofT}{
    Department of Computer Science, University of Toronto, Toronto, Canada
  }

  \icmlcorrespondingauthor{Chuqin Geng}{chuqin.geng@mail.mcgill.ca}
  \icmlcorrespondingauthor{Li Zhang}{lizhang@cs.toronto.edu}
  \icmlcorrespondingauthor{Xujie Si}
  {six@cs.toronto.edu}
  \icmlkeywords{Machine Learning, ICML}

  \vskip 0.3in
]



\printAffiliationsAndNotice{}

\begin{abstract}
We challenge black-box purely deep neural approaches for molecules and graph generation, which are limited in controllability and lack formal guarantees. We introduce Neuro-Symbolic Graph Generative Modeling (NSGGM), a neurosymbolic framework that reapproaches molecule generation as a scaffold and interaction learning task with symbolic assembly. An autoregressive neural model proposes scaffolds and refines interaction signals, and a CPU-efficient SMT solver constructs full graphs while enforcing chemical validity, structural rules, and user-specific constraints, yielding molecules that are correct by construction and interpretable control that pure neural methods cannot provide. \textsc{NSGGM} delivers strong performance on both \textit{unconstrained generation} and \textit{constrained generation} tasks, demonstrating that neuro-symbolic modeling can match state-of-the-art generative performance while offering explicit controllability and guarantees. To evaluate more nuanced controllability, we also introduce a \textit{Logical-Constraint Molecular Benchmark}, designed to test strict hard-rule satisfaction in workflows that require explicit, interpretable specifications together with verifiable compliance.
\end{abstract}

\section{Introduction}
\label{sec:intro}

Generative modeling of graph-structured data is a fundamental challenge with profound implications for scientific discovery and engineering \citep{DBLP:conf/iclr/BrockschmidtAGP19}. From designing novel molecules and materials \citep{Molecule_Design_VAE, MolGAN} to discovering electronic circuits \citep{LaMAGIC}, the ability to generate controllable graphs is a key tool for innovation. However, molecular graph generation remains challenging due to the non-sequential nature of graphs, combinatorial sparsity, and the need to preserve chemical and guidance control.

These difficulties have motivated many generative models.
Despite their success, 
deep molecular generative models (i.e. VAEs, autoregressive models, flows, diffusion) enable the exploration of the chemical space and are commonly evaluated using standardized distributional and diversity metrics \cite{Brown2019} \cite{Polykovskiy2020}. However, as structural constraints are typically learned implicitly rather than specified as explicit rules, these methods generally do not provide formal certificates for user-specified constraints. Additionally, unless constraint compliance is learned during training or enforced during generation, hard user-specified constraints are often handled via post-hoc filtering, which does not scale with increased scaffold size \cite{maziarz2022extendscaffolds}. In high-stakes domains where transparency and interpretability is critical \cite{amann2020}, this motivates methods that allow users to state hard constraints explicitly and verify satisfaction with checkable certificates, rather than relying on implicit neural control.


Motivated by these limitations, we introduce \textbf{N}euro-\textbf{S}ymbolic \textbf{G}raph \textbf{G}enerative \textbf{M}odeling (\textsc{NSGGM}), a neural–symbolic framework reframing molecule generation as a dual problem of scaffold and interaction learning and symbolic constraint modeling task. Our approach begins by decomposing molecules into a vocabulary of motifs/fragments (subgraph tokens) equipped with \emph{interface information} and \emph{neural guidance} specifying how they may connect to neighboring motifs, while deriving a set of symbolic assembly constraints over these interfaces. 
Benefiting from symbolic assembly, the neural component can focus on learning data-driven prior and providing high-level guidance, while an SMT solver \cite{smt}
enforces hard structural rules, guarantees constraint satisfaction by construction, and provide a transparent, verifiable layer of which constraints are satisfied.

While we share the idea of assembling predefined substructures with fragmented-based generative models \citep{JT-VAE,GraphINVENT,DBLP:conf/icml/JinBJ20}, the key distinction is how assembly is performed. Prior methods typically approach assembly with a neural component entirely, where constraint sanctification is handled implicitly with any additional requirements being enforced via conditioning or post-hoc checks / rules rather than explicit, verifiable rules. To contrast, \textsc{NSGGM} takes a neuro-symbolic approach by separating learned-high level proposal from exact assembly under explicit constraints, allowing for transparent and certifiable sanctification of hard requirements. In summary, our key contributions are:

\begin{itemize}
    \item We propose a two-stage neuro-symbolic pipeline where a lightweight neural model proposes subgraph tokens and provides global attachment / local interaction influence, and a symbolic SMT solver assembles a final graph by enforcing hard constraints by construction.
        
      \item \textsc{NSGGM} offers transparency, user-steerable controllability via explicit user-definable constraints expressed in the symbolic layer, while allowing the neural layer to complete unconstrained regions; this supports two complementary modes: (i) conditioning-based completion from user-supplied scaffolds, and (ii) constraint-driven synthesis subject to specified structural or topological constraints.
    
    \item We demonstrate that \textsc{NSGGM} achieves strong performance on unconstrained generation benchmarks and scaffold-conditioned generation, and present a case study of hard logical constraints with quantitative evaluation that highlights explicit constraint satisfaction and the transparency enabled by symbolic specifications.

\end{itemize}

\section{Preliminaries}
\label{sec:prelim}

\subsection{Graph Generation Tasks}

The primary goal of graph generation is to learn the underlying distribution from a dataset of graphs $\mathcal{G} = \{G_1, \dots, G_n\}$ in order to synthesize new, valid samples. We focus on graphs with categorical node and edge attributes, a common setup in molecular design \citep{DiGress}. Formally, we define a graph as a tuple $ G = (V, E, \mathbf{x}, \mathbf{e})$, where $V$ is the set of nodes, $E \subseteq V \times V$ is the set of edges, and $\mathbf{x}: V \to \mathcal{X}$ and $\mathbf{e}: E \to \mathcal{E}$ assign attributes from discrete label sets $\mathcal{X}$ and $\mathcal{E}$, respectively. The task is then to learn a parameterized model $P_\theta(G)$ that approximates the true data distribution and to use it for generating novel graphs $G' \sim P_\theta(\cdot)$.

Prevailing approaches fall into two families: (i) diffusion models, which learn $P_\theta(G)$ by minimizing a divergence (typically KL) and generate graphs via computationally inefficient, iterative denoising \citep{Diffusion_survey}; and (ii) fragment-based methods that assemble graphs from predefined substructures using heuristic search or hard-coded rules. Despite strong generative performance, these models typically offer limited user-steerable control and do not provide explicit, verifiable satisfaction of real-world design constraints.

\subsection{Constraint Satisfaction Problem Solving and SMT Solving}
\label{sec:smt}

The deterministic assembly stage of our framework is formulated as a Constraint Satisfaction Problem (CSP), necessitating a tool with formal guarantees. For this, we turn to Satisfiability Modulo Theories (SMT) solvers. While any off-the-shelf SMT solver could be employed, we choose Z3 \citep{z3} as it is considered state-of-the-art in performance and reliability. 

A theory \(T\) in first-order logic defines a set of symbols (e.g., constants like 0, functions like \(+\)) and axioms that constrain their interpretation (e.g., the theory of linear integer arithmetic, \(T_{\text{LIA}}\)). An SMT solver's task is to determine if a given quantifier-free first-order formula \(\phi\) is \emph{T-satisfiable}—that is, if there exists a model that satisfies both the axioms of \(T\) and the formula \(\phi\). In this work, the formula \(\phi\) will be the encoding of constraints used to assemble a valid graph from a proposed blueprint.

The problem is typically defined by a logical formula \(\phi\) over a set of variables
$Z = \{z_1, \dots, z_m\}$,
where each variable \(z_i\) has a corresponding domain \(D_i\). The formula is a conjunction of constraints
\begin{equation}
C = \{c_1, \dots, c_n\}:
\quad
\phi := \bigwedge_{j=1}^{n} c_j(Z_j), \quad Z_j \subseteq Z.
\end{equation}
A solution is a \emph{model} \({M}\), which is an interpretation mapping each variable to a value in its domain,
\({M}: Z \to D\), such that the formula is satisfied (\({M} \models \phi\)). The solver returns one of two outcomes:
\begin{itemize}
    \item \texttt{SAT} (Satisfiable): A valid model \({M}\) exists and is returned.
    \item \texttt{UNSAT} (Unsatisfiable): The solver proves that no such model exists.
\end{itemize}
The model \({M}\) returned by the solver on a \texttt{SAT} result provides the deterministic instructions for constructing a valid graph.  We detail our specific constraint encoding \(\phi\) in Section~\ref{sec:Assembly_Constraints}.

\section{The NSGGM Framework}
\label{sec:method}

\begin{figure*}[t]
  \centering
  \includegraphics[
    width=\textwidth,
    height=0.24\textheight,
    keepaspectratio
  ]{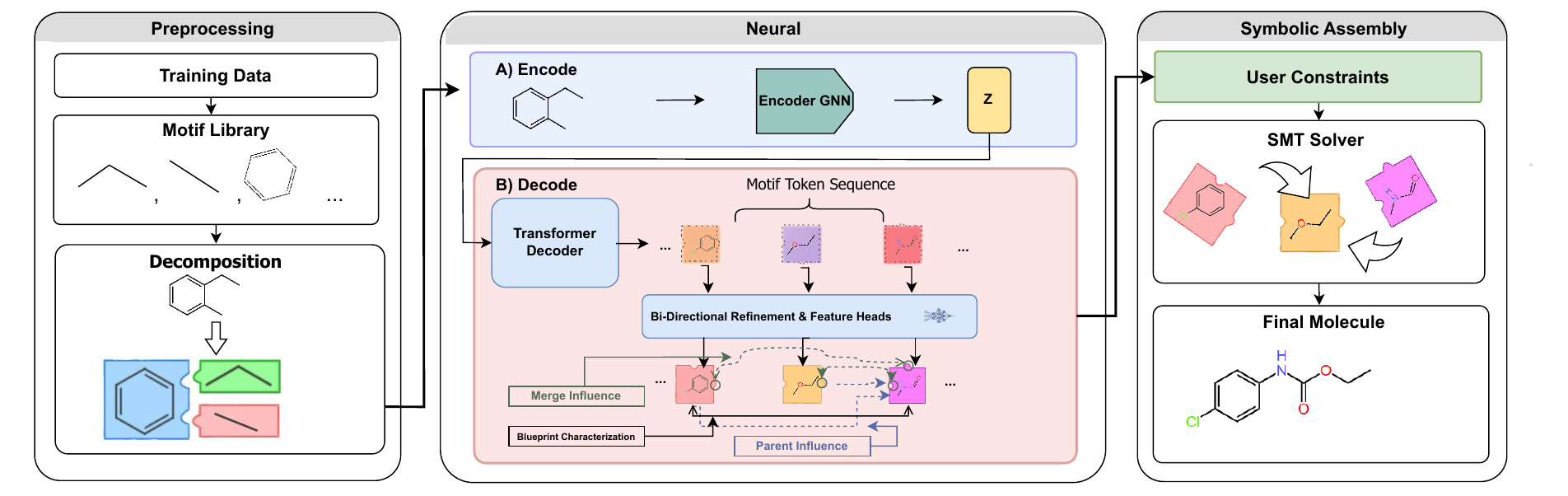}
    \caption{An overview of the \textsc{NSGGM} framework.}
  \label{fig:overview}
\end{figure*}


\label{sec:method}
\textsc{NSGGM} treats graph generation as a compositional task, analogous to building with modular components. Our method first decomposes existing molecule scaffolds into a vocabulary of fundamental pieces. A new, valid graph is then created by sampling from this vocabulary and assembling the pieces while adhering to symbolic assembly constraints and neural guidance. Figure~\ref{fig:overview} provides an overview of this neuro-symbolic framework.

\subsection{Decomposition and Vocabulary}
\label{sec:Graph_Decomposition}
\paragraph{Structural partitioning}
Our decomposition leverages the graph’s cycle structure, akin to fragment-based methods (e.g., \citealp{JT-VAE,GraphINVENT}).
For $G=(V,E)$, we compute a minimum cycle basis $\mathcal{C}(G)=\{c_1,\ldots,c_p\}$ and split edges into
\begin{equation}
E_{\mathcal{C}}=\bigcup_{j=1}^{p}E(c_j),\qquad E_A=E\setminus E_{\mathcal{C}}.
\end{equation}
Let $G_A=(V,E_A)$. We define the set of \emph{primitives} as
\begin{equation}
\mathcal{P}(G)\;\coloneqq\;\mathcal{C}(G)\ \cup\ \mathrm{Components}(G_A),
\end{equation}
and enumerate $\mathcal{P}(G)=\{P_i=(V_i,E_i)\}_{i=1}^{m}$.
By construction, these primitives form an \emph{overlapping cover} of $G$, i.e.,
$\bigcup_i V_i = V$ and $\bigcup_i E_i = E$.
We make overlaps explicit via
\begin{equation}
V_{\mathrm{sh}}=\!\!\bigcup_{i\neq j}(V_i\cap V_j),\qquad
E_{\mathrm{sh}}=\!\!\bigcup_{i\neq j}(E_i\cap E_j).
\end{equation}
For the acyclic part, we further refine $\mathrm{Components}(G_A)$ into recurring \emph{tree motifs}
(Appendix~\ref{sec:further_decomposition}).
Compared to \citet{JT-VAE}, which uses single edges (plus rings) as acyclic clusters, we admit multi-edge
tree motifs and equip them with explicit slot interfaces; overlaps ($V_{\mathrm{sh}},E_{\mathrm{sh}}$) are reconciled
during assembly by slot matching, without requiring explicit projection maps.


\paragraph{Molecular interface characterization}
\label{sec:interface_assignment}
To enable guided reassembly, we define interfaces over node and edge \emph{bond-type slot assignments}, $\sigma_V$ and $\sigma_E$, enriched with chemistry-aware attributes.

Let $\mathcal{E}$ be the set of element types and $\mathcal{B}$ the set of bond orders (e.g., $\mathcal{B}=\{1,2,3\}$ for single/double/triple), and define
$\operatorname{elem}\colon V\to\mathcal{E}$,
$\operatorname{val}\colon V\to\mathbb{N}$,
and $\operatorname{bo}\colon E\to\mathcal{B}$.
For a token $g_i$, define the residual bond-type capacities at $v$ as
\[
r_{i,b}(v)\coloneqq \operatorname{val}(v)-\!\!\!\sum_{e\in E_i(v)}\!\!\!\operatorname{bo}(e)
\]
restricted to bond order $b\in\mathcal{B}$ and set the node slot assignment to
\[
\begin{aligned}
\sigma_V(v,g_i)
&= \bigl(\operatorname{elem}(v),\,\operatorname{val}(v),\,(r_{i,b}(v))_{b\in\mathcal{B}}\bigr),\\
r_{i,b}(v) &\ge 0 \quad \forall b\in\mathcal{B}.
\end{aligned}
\]
where $E_i(v)$ are bonds incident to $v$ within token $g_i$.

For shared internal bonds, we record their bond order:
\[
\sigma_E(e,g_i)\coloneqq \operatorname{bo}(e), \qquad e\in E_i\cap E_{\mathrm{sh}}.
\]

We prove that Structural Partitioning yields an overlapping cover of $G$ with cycles and tree components as primitives, and uniquely determined interface slots (Proposition~\ref{prop:partition-legit}, Appendix~\ref{sec:NSGGM_proof}).

\paragraph{Vocabulary and blueprint}
A \emph{motif} is a subgraph type $g \in \mathcal{V}$ (up to isomorphism), where the global vocabulary
$\mathcal{V}$ is discovered from the training set $\mathcal{G}$.
For each primitive occurrence $P_i=(V_i,E_i)\in\mathcal{P}(G)$, we assign its motif type $g_i\in\mathcal{V}$
and let the neural model select an \emph{interface characterization}
\[
s_i=(\sigma_V^i,\sigma_E^i),
\]
A concrete assignment of the interface-slot functions, restricted to characterizations observed in the training data.
We represent the resulting annotated instance by $t_i=(g_i,s_i)$.
The \emph{blueprint} for $G$ is the multiset of annotated instances
\begin{equation}
S_G = \{\, t_i \,\}_{i=1}^{m},
\end{equation}
where $m \coloneqq |\mathcal{P}(G)|$ is the number of primitive occurrences in $G$.

\subsection{Assembly Constraints for Graph Synthesis}
\label{sec:Assembly_Constraints}
Given a blueprint $S'=\{t_1,\ldots,t_k\}$ of slotted primitives $t_i= (g_i, s_i)$, determine a \emph{merging of nodes} across primitives such that (i) matched node and edge slots are compatible (type-equality and residual-capacity satisfaction), and (ii) the induced identifications yield a consistent graph $G'$ respecting all connections. This is encoded as an SMT constraint-satisfaction problem with variables for node identifications and constraints enforcing slot compatibility and global consistency.

\paragraph{Decision variables} 
Let $V_{\mathrm{slots}}=\bigcup_i V_i^{\mathrm{slot}}$ be the set of all node-slot atoms in the blueprint.
For every pair of distinct primitives $i\neq j$ and atoms $u\in V_i^{\mathrm{slot}},\,v\in V_j^{\mathrm{slot}}$, we introduce a Boolean merge variable
\[
m_{u,v}\in\{0,1\},
\]
interpreted as $m_{u,v}= True \iff u $ and $v$ are merged in  $G'$. Edge-slot sharing is realized implicitly via endpoint merges: two edge-slot copies $(u,v)\in E_i^{\mathrm{slot}}$ and $(u',v')\in E_j^{\mathrm{slot}}$ are considered \emph{edge-merged} iff their endpoints are merged in a consistent orientation satisfying the hard constraints outlined below and in Appendix ~\ref{sec:hard_constraints_full}

\paragraph{Hard constraints ($\phi_{\text{hard}}$)} Hard constraints are inviolable and enforce topological integrity and interface consistency. Let the internal (within-primitive) bond-order degree of an atom $u\in V_i$ be $\deg_i(u)$. We impose: $\forall\, u,v,$
\[
\begin{aligned}
m_{u,v} \;\Rightarrow\;& \quad u \in V_i,\; v \in V_j 
\quad \text{for some } i \neq j, \\
& \quad 
m_{u,v} \Rightarrow \mathrm{elem}(u) = \mathrm{elem}(v), \\
& \quad \deg_i(u) + \deg_j(v) 
\le \mathrm{cap}\bigl(\mathrm{elem}(u)\bigr).
\end{aligned}
\]

For edge merges, this requires $m_{u,v}$ and $m_{nu,nv}$ to both be true, where $nu, nv$ are neighbours to $u,v$ from the respective subgraphs, that is, they form edges $e=(u,nu)\in t_i$ and
$e'=(v,nv)\in t_j, i\neq j$. We impose that may be merged only if they (i) have the $bo(e) = bo(e')$, and (ii) their endpoints are merged consistently, that is $m_{u,v} = m_{nu,nv} = True$ with the fact that $bo(e)$ can be subtracted from valency cap and $e\in E_i^{\mathrm{slot}}$ and $e'\in E_j^{\mathrm{slot}}$. See details in Appendix~\ref{sec:hard_constraints_full}.

\paragraph{Neural guidance (Soft constraints ($\phi_{\text{soft}})$)}
We use weighted Max-SMT to satisfy all hard constraints while maximizing soft rewards. Soft constraints are used to encourage connectivity, that is, encourage subgraph tokens to connect to each other or in certain ways so that the final result is not fragmented. For implementation details see Appendix ~\ref{soft_constraint_connectivity}

\emph{(i) Neural parent guidance (decoder tree).}
If the decoder predicts a parent primitive for each child, we reward connecting each child to its predicted parent. \label{parent_guidance}

\noindent\textit{Correctness guarantees.} For any blueprint $S'$, there exists a feasible assignment that reassembles each training graph from its own tokens (Corollary~\ref{cor:witness}). Moreover, any assignment satisfying $\phi_{\text{hard}}$ yields a well-defined simple graph; in the molecular case it also enforces element consistency, unique bond orders, and exact valency (Theorem~\ref{thm:soundness}). Together, these imply correctness-by-construction and completeness for training graphs (Corollary~\ref{cor:cbc}). Proofs are provided in Appendix~\ref{sec:NSGGM_proof}.

\paragraph{User constraints ($\phi_{\mathrm{user}}$)}
Let $s$ denote the solver decision variables introduced by our assembly encoding
(e.g., merge variables $\{m_{u,v}\}$ and, when applicable, bond/order variables).
We optionally introduce auxiliary variables $a$ that represent higher-level, user-facing
properties of the assembled graph (e.g., indicators or counts), together with a definitional
theory $\phi_{\text{aux}}(s,a)$ that makes each such property a function of $s$.
Users may then supply an additional constraint formula
$\phi_{\text{user}}(s,a)$ written in the same quantifier-free SMT theory as our encoding.

User constraints never weaken hard constraints: the final hard feasibility formula is
\[
\Phi_{\text{hard}}(s,a)\;:=\;\phi_{\text{hard}}(s)\ \wedge\ \phi_{\text{aux}}(s,a)\ \wedge\ \phi_{\text{user}}(s,a).
\]
Thus $\phi_{\text{user}}$ can only restrict the feasible set (or make it empty). The solver returns either
(i) \textsc{sat} with a model $M$, from which we deterministically decode the output graph $G'$,
or (ii) \textsc{unsat}, certifying that no graph satisfies all hard requirements. The full implementation of these constraints in the solver is given in the Appendix ~\ref{solver_alg}

\subsection{Neural Proposals}
\label{sec:decoder}

Under our decomposition, each molecule, $\mathcal{G}$, is represented by a finite sequence of motif tokens, $\mathcal{T} =  (g_1, ..., g_n)$, from the global \textit{vocabulary} $\mathcal{V}$. We employ a VAE with a GNN encoder that operates on a \emph{motif graph} representation of a molecule, $G$.
Given a primitive decomposition $\mathcal{P}(G)=\{P_i\}_{i=1}^m$, we define the motif graph
$H(G)=(\mathcal{P}(G),\,E_H)$ where $(P_i,P_j)\in E_H$ iff $P_i$ and $P_j$ overlap (i.e., share at least one node/edge) or are connected by an edge in $G$.
The encoder maps $H(G)$ to a latent code $z$.
  We model the conditional $P(\mathcal{T} | z)$ auto-regressively, followed by a bidirectional refinement network that predicts \emph{blueprint} interface characterizations $(s_1, ... , s_n)$. The network classifies the \textit{blueprint} of  each token $g_i$ through $P(k | \mathcal{T})$ to select the top blueprint prototype from $\Omega_t$, a fixed set of top-$K$ \textit{blueprint} variants observed for that scaffold during training.

\paragraph{Scaffolds as sequences}
Let a \textit{scaffold proposal sequence} be an ordered sequence $S=(g_1,\dots,g_k,\texttt{<eos>})$.
Conditioned on the latent \textbf{$z$}, we parametrize the marginal likelihood as:
\begin{equation} p_\theta(S);=; \int p(\mathbf{z}) \cdot \prod_{\ell=1}^{k+1} P_\theta\big(g_\ell\mid g_{<\ell}, \mathbf{z}\big) , d\mathbf{z}, \; g_{k+1}\equiv \texttt{<eos>}. \label{eq:seq-factorization} \end{equation}

\textbf{Hierarchical graph encoder}
We define a GNN encoder $f_\theta$ that maps a molecule $G$ to an embedding in $\mathbb{R}^d$ by operating on a coarse-grained decomposition graph
$H(G) = (\mathcal{P}(G), E_H)$.
Nodes of $H(G)$ correspond to primitive occurrences $P_i=(V_i,E_i)\in\mathcal{P}(G)$ and are featurized by their motif type
$g_i\in\mathcal{V}$ (Section~\ref{sec:Graph_Decomposition}).
We define an edge $(P_i,P_j)\in E_H$ whenever $P_i$ and $P_j$ overlap in $G$, and annotate it with a binary relation
$
r_{ij}\in\{0,1\},
$
indicating whether the overlap includes a shared edge ($r_{ij}=1$) or only shared nodes ($r_{ij}=0$).



The latent $z$ is sampled from the global sum-pooling of the final representation $h_\mathcal{G}$ = $\sum h_i^{(L)}$:
\begin{align} \quad & z \sim \mathcal{N}\big(\mu_\phi(\mathbf{h}_{\mathcal{G}}), \sigma^2_\phi(\mathbf{h}_{\mathcal{G}})\big)
\end{align}
\textbf{Vocabulary decoder}: We model the autoregressive generation of the motif sequence $\mathcal{T} = (g_1, ..., g_L)$ by approximating $P(\mathcal{T} \mid z)$ via a causal Transformer decoder. Given time step $g_{< \ell}$ and latent $z$.
\subsection{Neural Guidance (Refinement)}
\paragraph{Post-decoder refinement}
Let the hidden decoder states $\{\mathbf{h}^{\mathrm{dec}}_\ell\}_{\ell=1}^{L}\subset\mathbb{R}^{d}$
conditioned on the partial token sequence $\mathcal{T}$ and latent $z$.
A bidirectional Transformer encoder $\phi_{\text{post}}$ refines these states to
$\mathbf{H}=\{\mathbf{h}_\ell\}_{\ell=1}^{L}$:
\[
\mathbf{H} \;=\; \phi_{\text{post}}\!\left(\mathbf{H}^{\mathrm{dec}}\right), \qquad \mathbf{h}_\ell\in\mathbb{R}^{d}.
\]
We then model three conditional distributions at each position $\ell$ that acts like \textit{neural guidance} during assembly.

\paragraph{Blueprint characterization head}
Recall that each blueprint instance is $t_\ell=(g_\ell,s_\ell)$, where $s_\ell$ is an interface characterization (Section~\ref{sec:Graph_Decomposition}).
We define a discrete characterization vocabulary
\[
\mathcal{V}_b \;\coloneqq\; \bigcup_{g\in\mathcal{V}} \mathcal{S}(g),
\]
where $\mathcal{S}(g)$ is the finite set of characterizations for motif type $g$ observed in the training data.
At position $\ell$, the decoder predicts $s_\ell$ from the hidden state $\mathbf{h}_\ell$ via
\[
p(s_\ell \mid g_{\le \ell}, z)
\;=\;
\mathrm{softmax}\!\bigl(\mathbf{W}_b \mathbf{h}_\ell + \mathrm{mask}(g_\ell)\bigr),
\]
where $\mathrm{mask}(g_\ell)$ restricts the softmax support to $\mathcal{S}(g_\ell)\subseteq\mathcal{V}_b$.

\paragraph{Merge-influence head}
To further guide assembly, we predict a binary \emph{merge-type} indicator $m_\ell \in \{0,1\}$ at each position $\ell$,
where $m_\ell=0$ denotes a \emph{node-merge} operation and $m_\ell=1$ denotes an \emph{edge-merge} operation.
We model this as a Bernoulli distribution parameterized by the refined state $\mathbf{h}_\ell$.

\paragraph{Parent-influence head}
We predict a parent index $\pi_\ell \in \{1,\dots,L\}\setminus\{\ell\}$
for each position $\ell$ (self-parenting is disallowed). We compute bilinear attention scores
\[
s_{\ell j} \;=\; d^{-1/2}\big(\mathbf{h}_\ell \mathbf{W}_Q\big)\big(\mathbf{h}_j \mathbf{W}_K\big)^\top,
\]
and normalize over all valid choices $j\neq \ell$:
\[
P(\pi_\ell = j \mid \mathcal{T}, z) \;=\;
\frac{\exp(s_{\ell j})}{\sum_{p \neq \ell}\exp(s_{\ell p})}, \qquad j \neq \ell.
\]



\subsection{Training Details}
\paragraph{Decomposition} Each molecule is decomposed as described in Section~\ref{sec:Graph_Decomposition}. Then, each interface, $h$, is mapped to: (1) a \emph{structure token} $t_\ell$ given by a structural Weisfeiler--Lehman (WL) hash $h_{\text{struct}}$ (looked up in a fixed vocabulary), and (2) a \emph{blueprint characterization variant} hash $h_{\text{meta}}$ capturing interface/slot instantiation.
For every $h_{\text{struct}}$, we maintain the top-$K$ most frequent metadata variants and supervise a categorical target $t'_\ell \in \{1,\dots,K\}$ whenever the ground-truth $h_{\text{meta}}$ falls in this candidate set (otherwise the position is ignored via masking).

\begin{table*}[t]
\centering
\caption{Unconditional de novo molecule generation on GuacaMol.}
\label{tab:guacamol}
\resizebox{0.8\textwidth}{!}{%
\begin{tabular}{l r r r r r}
\toprule
\textbf{Model} & \textbf{Valid} $\uparrow$ & \textbf{Unique} $\uparrow$ & \textbf{FCD} $\uparrow$ & \textbf{KL} $\uparrow$ & \textbf{Novel} $\uparrow$ \\
\midrule
\textsc{MoLeR} \citeyearpar{maziarz2022extendscaffolds}          & 100.0 & 100.0 & 62.5 & 96.4 & 99.1 \\
\textsc{MiCaM} \citeyearpar{geng2023de}          & 100.0 & 99.4  & 73.1 & 98.9 & 98.6 \\
\textsc{DiGress} \citeyearpar{DiGress}         & 85.2  & 100.0 & 68.0 & 92.9 & 99.9 \\
LSTM             & 95.9  & 100.0 & 91.3 & 99.1 & 91.2 \\
\textsc{DISCO-GT} \citeyearpar{xu2024discretestate}        & 100.0 & 86.6  & 59.7 & 92.6 & 99.9 \\
\textsc{NSGGM} (ours)
                 & 100.0 $\pm$ 0.0 & 100.0 $\pm$ 0.0 & 86.1$\pm$0.4 & $95.5 \pm 0.1$ & $98.9 \pm 0.0$ \\
\bottomrule
\end{tabular}
}

\end{table*}
\begin{table*}[t]
\centering
\caption{Unconditional de novo molecule generation on MOSES.
We report standard MOSES benchmark metrics. --- indicates that the metric is not reported in the original paper.}
\label{tab:moses}
\resizebox{\linewidth}{!}{%
\begin{tabular}{l r r r r r r r}
\toprule
\textbf{Model} & \textbf{Valid} $\uparrow$ & \textbf{Unique} $\uparrow$ & \textbf{Novel} $\uparrow$ & \textbf{Filters} $\uparrow$ & \textbf{FCD} $\uparrow$ & \textbf{SNN} $\uparrow$ & \textbf{Scaf} $\uparrow$ \\
\midrule
\textsc{JT-VAE} \citeyearpar{JT-VAE}         & 100.0 & 100.0 & 99.9 & 97.8 & 81.8 & 0.53 & 10.0 \\
\textsc{MoLeR} \citeyearpar{maziarz2022extendscaffolds}           & 100.0 & 99.9  & 97.1 & ---   & 85.2 & ---  & ---  \\
\textsc{DiGress} \citeyearpar{DiGress}         & 85.7  & 100.0 & 95.0 & 97.1 & 78.8 & 0.52 & 14.8 \\
\textsc{GraphINVENT} \citeyearpar{GraphINVENT}   &  96.4  & 99.8 & --- & 95.0 & 78.3 & 0.54 & 12.7 \\
\textsc{DISCO-GT} \citeyearpar{xu2024discretestate}      & 88.3 & 100.0 & 97.7 & 95.6 & 74.9 & 0.50 & 15.1 \\
\textsc{NSGGM} (ours)
                & 100.0 $\pm$ 0.0 & 100.0 $\pm$ 0.0 & $94.4 \pm 0.3$ & $97.1 \pm 0.1$ & $82.3\pm 0.3$ & $0.5 \pm 0.0$ & $11.2 \pm 0.1$ \\
\bottomrule
\end{tabular}
}
\end{table*}

\paragraph{Two-stage decoding}
Decoding scaffold proposals happens in two stages: (1) an autoregressive Transformer decoder generates scaffold-token sequence $\mathcal{T}$ conditioned on latent $z$,
(2) a bidirectional Transformer encoder refines the token-level representations through the \emph{full} teacher-forced sequence to predict global structural annotations that guides downstream assembly.\\

\paragraph{Objective} We aim to maximize the marginal log-likelihood of the observed outputs
$y := (\mathcal{T}, \pi_{1:L}, m_{1:L}, t'_{1:L})$ under both training modes:
full-molecule reconstruction and scaffold-conditioned generation:
\begin{equation}
\max_{\theta,\phi}\;
\mathbb{E}_{(\mathcal{G}) \sim \mathcal{D}}
\Big[
\int p_{\theta}(y \mid z)\, p_{\phi}(z \mid \mathcal{G})\, dz,
\Big].
\label{eq:pre_elbo_mixture_objective_short}
\end{equation}

which is intractable to optimize directly; in practice we maximize a variational lower bound (ELBO). The full ELBO and complete training are given in Appendix~\ref{sec:NSGGM_training}.



\section{Experiments and Results}
\label{sec:eval}

We evaluate our neurosymbolic approach along two objectives: (i) distribution fidelity and competitive performance under unconstrained generation on small and large scale benchmarks, and (ii) exploratory robustness under hard structural constraints at inference time (i.e. logical and scaffold constraints). Accordingly, we test whether exact symbolic constraints can be enforced without costly retraining while preserving generative signals from the neural mode for exploration within the constrained space.  \\

\begin{table}[t]
  \centering
  \caption{Unconditioned molecule generation on QM9 implicit hydrogens. --- indicates that the metric is not reported in the original paper.}
  \label{tab:qm9_implicit}
  \resizebox{\linewidth}{!}{%
  \begin{tabular}{lrrrr}
    \toprule
    Model & Valid\,$\uparrow$ & Unique\,$\uparrow$ & FCD\,$\uparrow$ & KL\,$\uparrow$ \\
    \midrule
    \textsc{JT-VAE} \citeyearpar{JT-VAE}            & 100.0  & 54.9              & 58.8                & 89.1 \\
    \textsc{MoLeR}  \citeyearpar{maziarz2022extendscaffolds}             & 100.0  & 94.0             & 93.1                  & 96.9 \\
    \textsc{LO-ARM-st-sep} \citeyearpar{wang2025learningorder}      & 99.9 & 98.9 & 95.3 & --- \\
    \textsc{DiGress} \citeyearpar{DiGress}            & $99.0 \pm 0.1$ & $96.2 \pm 0.1$ & --- & ---\\
    \textsc{UniGEM}  \citeyearpar{feng2025unigem}           & $ 95.0 \pm 3.1$   & 98.1         & --- & --- \\
    \textsc{MiCaM} \citeyearpar{geng2023de}             & 100.0  & 93.2              & 94.5                  & 98.0 \\
    \textsc{DISCO-GT}  \citeyearpar{xu2024discretestate}          & $99.3  \pm 0.6$ & 99.5 & --- & --- \\
    \textsc{NSGGM} (ours) & $100.0 \pm 0.0$ & $96.1 \pm 0.2$ & $94.7 \pm 0.4$ & $94.0 \pm 1.0$ \\
    \bottomrule
  \end{tabular}
  }
\end{table}

\subsection{Unconstrained Generation}

\textbf{Setup}
We evaluate on
QM9~\citep{wu2018moleculenetbenchmarkmolecularmachine},
MOSES~\citep{Polykovskiy2020}, and GuacaMol~\citep{Brown2019}; dataset details
are in Appendix~\ref{sec:dataset}. For MOSES and GuacaMol, we use the official training, validation, and test splits and 80/10/10 split for QM9. Unless stated otherwise, all results are averaged over three runs, each generating 10k molecules, and we report mean and standard deviation. Also, we report generation throughput and hardware in Appendix~\ref{app:unconstrained_setup_details}.

In our benchmarks, for distribution fidelity, we use Frechet ChemNet Distance (FCD) \citep{doi:10.1021/acs.jcim.8b00234} and KL Divergence (KL). KL measures how different the property distributions of generated molecules are from the training data. FCD measures the similarity between generated and real molecules using ChemNet embeddings. Additionally, we evaluate on standard fidelity-focused metrics like $Validity$ (Correctness) and $Unique$ (Uniqueness). \\
\\
\textbf{Small scale unconstrained generation} We report QM9 with the standard metrics: \emph{Validity}, \emph{Unique} (Uniqueness), FCD, and KL in Table~\ref{tab:qm9_implicit}.

\paragraph{Large-scale generation} We evaluate on MOSES \cite{Polykovskiy2020} and GuacaMol \cite{Brown2019}. For GuacaMol, we report Validity, Uniqueness, Novelty, and standard distributional metrics including FCD and KL divergence in Table~\ref{tab:guacamol}. For MOSES in Table~\ref{tab:moses}, we report FCD, and standard MOSES evaluation metrics including Filters, SNN, and Scaffold similarity (Scaf) on the \textit{TestSF} split which is made of separate scaffolds not seen during training. Filters measure the percentage of generated molecules that pass the same chemical constraints of the test set, SNN measures the similarity to the nearest molecule using Tanimoto similarity, and scaffold similarity represents the Bemis--Murcko scaffold overlap with test molecules.



\paragraph{Analysis} 
As shown in Tables~\ref{tab:qm9_implicit}, \ref{tab:guacamol}, \ref{tab:moses}, \textsc{NSGGM} demonstrates consistent strong behaviour across the increasing scales. On all benchmarks, \textsc{NSGGM} achieves $100\%$ validity by construction and strong distributional alignment, confirming the importance of the neural component in guiding the symbolic solver. On GuacaMol, \textsc{NSGGM} achieves competitive normalized FCD, outperforming most baselines while maintaining high novelty, indicating that symbolic assembly does not degrade distributional metrics at scale. On MOSES, \textsc{NSGGM} achieves high filter pass rates and low FCD with strong nearest-neighbor similarity (SNN), demonstrating its ability in remaining close to the training distributional despite symbolic structural enforcement. 

\begin{table}[h]
  \centering
  \caption{\textsc{NSGGM} ablations on GuacaMol. \textbf{Full:} complete neuro-symbolic pipeline (neural proposal + neural refinement + symbolic solver). \textbf{W/o neural guidance:} remove the neural refinement/guidance stage; the solver operates on coarse/high-level proposals. \textbf{W/o solver:} remove the symbolic solver; generate directly off of neural component.}
  \label{tab:guacamol_ablation}
  \setlength{\tabcolsep}{4pt}
  \renewcommand{\arraystretch}{0.95}
  \footnotesize
  \begin{tabular}{lrr}
    \toprule
    Variant & Valid\,$\uparrow$ & FCD\,$\uparrow$ \\
    \midrule
    Full       & $100.0 \pm 0.0$ & $86.1 \pm 0.4$ \\
    W/o neural guidance & $100.0 \pm 0.0$          & $46.0 \pm 1.2$          \\
    W/o solver & $58.4 \pm 2.4$         & $30.0 \pm 1.1$          \\
    \bottomrule
  \end{tabular}
\end{table}

\textbf{Ablation}
To further our findings, we perform ablations on \textsc{GuacaMol} in Table~\ref{tab:guacamol_ablation} to quantify how much the performance depends on the neural versus the symbolic solver component. From the table, removing the neural guidance component substantially degrades distributional match (FCD), while removing the solver greatly reduces both validity and FCD as local connectivity signal is lost.

\label{tab:logical_constraints}

\begin{figure}
    \centering
    \includegraphics[width=1.0\linewidth]{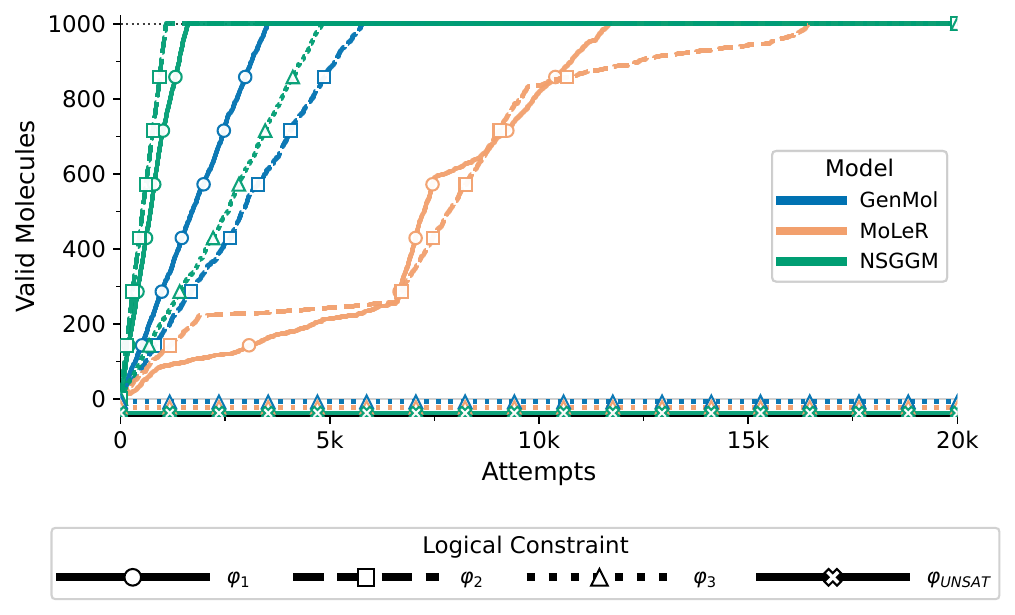}
    \caption{Sample efficiency under logical constraints: cumulative number of constraint-satisfying molecules (up to 1000) versus the number of generation attempts. Colours denote model type and markers/linestyles denote constraints $\varphi_1$--$\varphi_{UNSAT}$ (with $\varphi_{UNSAT}$ unsatisfiable by construction). We only plot first 20K attempts for readability as curves beyond 20K are flat.}
    \label{fig:sample_efficiency}
\end{figure}
\begin{figure}[t]
  \centering
  \includegraphics[width=0.85\columnwidth]{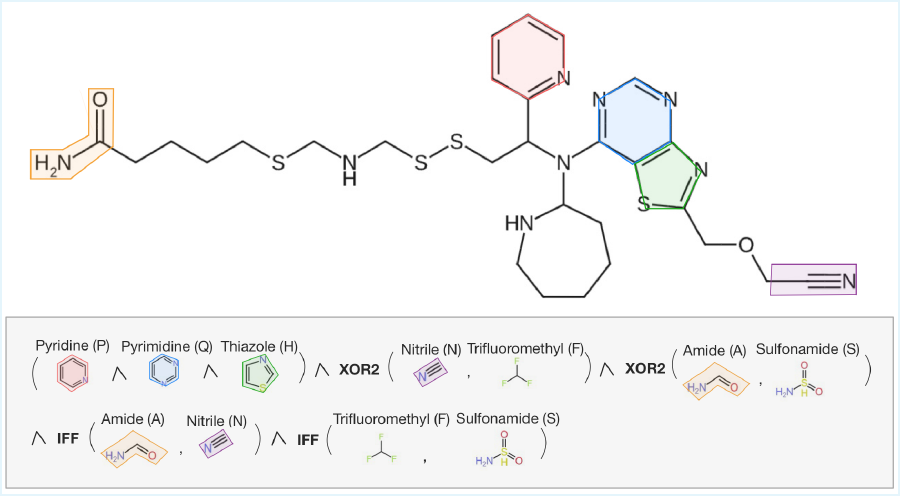}
  \caption{Example generated molecule satisfying $\varphi_3$}
  \label{fig:example_constraint_output}
\end{figure}

\subsection{Logical Constraint Satisfaction}
Next, we evaluate $\textsc{NSGGM}$ on four synthetic, chemistry-inspired, logical constraints ($\rho_{1}, \rho_{2}, \rho_{3}, \rho_{\text{UNSAT}}$) spanning high, low, and zero training-data support, designed to test whether \textsc{NSGGM} can produce chemically valid molecules that exactly satisfy increasingly difficult specifications. 

\paragraph{Baselines} We compare against \textsc{MoLeR} \citep{maziarz2022extendscaffolds} and \textsc{GenMol} \citep{genmol2025} as two recent controllable generators that can be seeded/conditioned on structural conditions, while many other models would require task-specific fine-tuning to support comparable control. 

All neural components are trained on GuacaMol. To obtain a comparable baseline for purely neural generators, we use a \textit{generate-and-filter} evaluation strategy: we sample candidate molecules from baseline models (biased sampling by seeding with molecules from training set that satisfy the corresponding constraint) and apply exact logical constraint checking, rejecting samples that violate it. For all neural models, we give a sampling budget of \textbf{50K} molecule attempts.  For \textsc{NSGGM}, logical constraints are applied only at generation time via SMT-based symbolic verification. We \textbf{do not} fine-tune the model on any of the objectives and consider the \textit{UNSAT} case for our model to count as a \textit{fail} for a fair comparison (otherwise \textsc{NSGGM} will be trivially perfect).

\paragraph{Constraint semantics}
Each constraint is a propositional formula over scaffold-presence predicates designed to test non-local logical coupling (i.e., XOR/IFF/AND), varying support/coverage (high vs.\ low vs.\ near-zero), and correct UNSAT detection. We give an annotated example satisfying molecule for constraint $\varphi_3$ from our model in Figure~\ref{fig:example_constraint_output}. Full specifications of constraints $\{\varphi_{1},\varphi_{2},\varphi_{3},\varphi_{\mathrm{UNSAT}}\}$ are provided in Appendix~\ref{app:details}.

\paragraph{Results}
Figure~\ref{fig:sample_efficiency} shows the sample efficiency under increasing restrictive logical constraints. \textsc{NSGGM} achieves greater satisfaction efficiency on $\varphi_1$ and $\varphi_2$ within the fixed budget (exceeding $50\%$ efficiency), while neural baselines lag substantially. Constraint $\varphi_3$ reveals a major distinction: since $\varphi_3$ has no satisfying molecules in the training set, unconstrained generation with filtering becomes ineffective where baseline models obtain zero satisfying samples within the sampling budget. On the other hand, \textsc{NSGGM} still achieves a non-trivial satisfaction rate (about $25\%$) thanks to the exact symbolic enforcement rather than relying on rare hits. Finally, we evaluate only $\textsc{NSGGM}$ on $\varphi_{UNSAT}$, where it produced no satisfying result, consistent transparent, interpretable, and formal constraint checking.


Overall, these results showcase the difficulty of enforcing global, interpretable logical structure when sampling alone, and reveals how a SMT-based verification offers a transparent and auditable approach in enforcing exact constraint satisfaction, an important property in controlled molecule generation environments where requirements must be stated and checked explicitly.

\begin{table}[t]
\centering
\caption{Hard scaffold-constrained generation results across scaffold sets $\Sigma_1$--$\Sigma_3$. \textsc{NSGGM} achieves perfect validity and novelty while delivering leading uniqueness.}
\label{tab:scaffold_results}
\footnotesize
\setlength{\tabcolsep}{2.5pt}
\renewcommand{\arraystretch}{0.85}
\begin{tabular}{l l r r r}
\toprule
\rotatebox[origin=c]{90}{Scaf.} & Model & Validity $\uparrow$ & Uniqueness $\uparrow$ & Novelty $\uparrow$ \\
\midrule

\multirow{3}{*}{\rotatebox[origin=c]{90}{$\Sigma_1$}} &
\textsc{MoLeR}  & 100.0 & 93.2 & 100.0 \\
& \textsc{GenMol} & 82.4  & 75.1 & 70.3\\
& \textsc{NSGGM} (Ours) & 100.0 & 98.9 & 100.0 \\
\cmidrule(lr){2-5}

\multirow{3}{*}{\rotatebox[origin=c]{90}{$\Sigma_2$}} &
\textsc{MoLeR}  & 100.0 & 93.1 & 93.8 \\
& \textsc{GenMol} & 81.5  & 70.4& 60.2\\
& \textsc{NSGGM} (Ours) & 100.0 & 91.5 & 100.0 \\
\cmidrule(lr){2-5}

\multirow{3}{*}{\rotatebox[origin=c]{90}{$\Sigma_3$}} &
\textsc{MoLeR}  & 100.0& 96.6 & 100.0 \\
& \textsc{GenMol} & 82.1  & 83.3& 47.8 \\
& \textsc{NSGGM} (Ours) & 100.0 & 99.8 & 100.0 \\
\bottomrule
\end{tabular}
\vspace{-6pt}
\end{table}

\subsection{Scaffold-Constrained Generation} 
Next, we evaluate hard constrained scaffold generation, where each molecule must contain a fixed scaffold as a connected subgraph or as part of a logical constraint. This can be represented as a hard structural constraint in symbolic logic and enforced by our symbolic SMT-solver rather than being approximated via rewards.
Unlike unconstrained generation, under hard constraints, the valid output space is fundamentally restricted, so we will focus on constraint satisfaction and exploratory metrics such as \textit{Uniqueness} and \textit{Novelty}. Similar to the logical constraints evaluation, we compare against \textsc{MoLeR} and \textsc{GenMol} as they can explicitly support scaffold / fragment-conditioned decoration. As all baselines and \textsc{NSGGM} natively enforce the provided scaffold, scaffold retention is $100\%$ across all methods, so we omit it from our results and focus on validity, uniqueness, and novelty.







We evaluate by generating 1000 molecules for each core scaffolds, specifically Quinoline ($\Sigma_1$) \citep{quinoline}, {Phenazine} ($\Sigma_2$) \citep{phenazine}, and a large drug-like core scaffold ($\Sigma_3$) \citep{third_molecule} as fixed scaffolds. As shown in Table~\ref{tab:scaffold_results}, \textsc{NSGGM} and \textsc{MoLeR} achieve $100\%$ validity by construction. \textsc{NSGGM} outperfoms \textsc{MoLeR} on uniquness for $\Sigma_1$ and $\Sigma_3$ while remaining competitive for $\Sigma_2$. \textsc{MoLeR} is a strong baseline as it is scaffold-seeded by design, yet \textsc{NSGGM} still leads in scaffold exploration, which we attribute to the tunable neural-solver balance: more solver dependence increases exploration.

\vspace{-5pt}

\section{Related Work}
\label{sec:related}

Diffusion models have emerged as a dominant paradigm for graph and molecular generation, focusing on unconstrained generation and task optimization, from continuous formulations that corrupt adjacency/node representations with Gaussian noise and denoising them back to graphs \citep{niu2020permutation, jo2022scoregraphSDE} to discrete diffusion that operates directly over categorical node and edge attributes \citep{DiGress, haefeli2023}. Some recent diffusion works further improve controllability through conditioning and guidance mechanisms by activating property-aware objectives late in the schedule (after scaffold emerges) \citep{genmol2025, feng2025unigem}. Concurrently, \citet{haefeli2023} study unattributed graphs and also find discrete diffusion effective. While highly effective for distribution learning, constraints in diffusion-based generators are typically incorporated implicitly, as a result, they generally do not provide formal guarantees for arbitrary user-specified global structural or logical constraints.

Beyond diffusion, VAEs, GANs, and flows have been explored \citep{zhu2022surveydeepgraphgen, madhawa2019graphnvp, liu2018cgvae, luo2021graphdf}, but they typically lag strong autoregressive models \citep{liao2019gran, GraphINVENT} and motif-based methods \citep{jin2020hierarchicalmotifs, maziarz2022extendscaffolds} that encode domain knowledge. While these models can support practical conditioning i.e. \textsc{MoLeR} and \textsc{GenMol} can be seeded/conditioned on structural inputs, their control is expressed through conditioning variable and guidance examples which makes it difficult to (i) explicitly enforce coupled non-local logical statements exactly and (ii) certify and interpret infeasibility for UNSAT specifications.


\section{Conclusion}


\label{sec:conclusion}
We introduce Neuro-Symbolic Graph Generative Modeling (\textsc{NSGGM}), a neuro-symbolic framework for molecule generation that combines scaffold/interaction proposals with SMT-based symbolic assembly. This results in strong generative quality while allowing for explicit, interpretable constraints with SAT/UNSAT detection and correctness-by-construction under encoded rules; we also introduce a \textit{Logical-Constraint Molecular Benchmark} to evaluate strict and explicit rule compliance. While explicitly specifying constraints in logic can be fragile and may not capture under-specified preferences, we believe this work is an important step in bringing transparent, auditable control to a field dominated by black-box neural generative models, and we hope it motivates future works that better balance flexibility with verification and interpretability.

\newpage

\section*{Impact Statement} 
This work introduces \textsc{NSGGM}, a framework that bridges deep learning with formal symbolic reasoning to ensure structural and chemical validity in molecular generation. By ensuring that generated molecules are correct by construction via an SMT solver, \textsc{NSGGM} significantly reduces the risk of AI-generated hallucinations---chemically impossible or unstable structures that often consume substantial laboratory resources to disprove. This advancement accelerates the pipeline from computational design to experimental validation, potentially lowering the costs and timelines associated with developing life-saving therapeutics. Furthermore, the introduction of the \textit{Logical-Constraint Molecular Benchmark} promotes a necessary shift toward verifiable AI in highly regulated sectors such as pharmacology and materials science. By allowing researchers to define explicit safety boundaries and enforce strict hard-rule satisfaction, this approach fosters greater trust among domain experts and regulatory bodies, ensuring that AI-driven designs comply with rigorous safety, ethical, and environmental standards.

\newpage

\bibliography{icml}
\bibliographystyle{icml2026}

\newpage

\appendix

\section{Further Decomposition of Acyclic Components}
\label{sec:further_decomposition}

\begin{algorithm}[H]
    \caption{Refine Acyclic Components into Tree Motifs}
    \label{alg:acyclic_refine}
    \small
    \DontPrintSemicolon
    \SetKwProg{Fn}{Function}{}{end}
    \KwIn{Graph $G=(V,E)$, acyclic edge set $E_A$, minimum support $\tau$}
    \KwOut{Tree-motif tokens $\mathcal{T}_{\text{tree}}$, residual fragments $\mathcal{R}_{\text{tree}}$}
    \SetKwFunction{Refine}{RefineAcyclic}
    \SetKwFunction{BC}{BlockCutDecompose}
    \SetKwFunction{Compress}{CompressDegTwoChains}
    \SetKwFunction{Canon}{CanonicalizeSubtree}
    \SetKwFunction{Mine}{CountSupport}
    \SetKwFunction{Split}{SplitAtArticulations}
    \SetKwFunction{Subtrees}{EnumerateSubtrees}
    \SetKwFunction{Assemble}{AssembleToken}
    \SetKwFunction{Slots}{ComputeSlots}

    \Fn{\Refine{$G, E_A, \tau$}}{
      $G_A \gets (V, E_A)$ \tcp*{Acyclic remainder subgraph}
      $\mathcal{C} \gets \text{ConnectedComponents}(G_A)$\;
      $\mathcal{T}_{\text{tree}} \gets \varnothing,\ \mathcal{R}_{\text{tree}} \gets \varnothing$ \;

      \ForEach{$C \in \mathcal{C}$}{
        \tcp{(1) BC decomposition and optional path compression}
        $(\mathcal{B}, \mathcal{A}) \gets \BC(C)$ \tcp*{Blocks $\mathcal{B}$ and articulation points $\mathcal{A}$}
        $\mathcal{B} \gets \{\Compress(B)\mid B\in\mathcal{B}\}$ \tcp*{Contract degree-2 chains within blocks}

        \tcp{(2) Subtree enumeration and canonicalization}
        $\mathcal{S} \gets \bigcup_{B\in\mathcal{B}} \Subtrees(B)$ \tcp*{Enumerate candidate subtrees (bounded depth/size)}
        $\mathcal{K} \gets \{\Canon(S)\mid S\in\mathcal{S}\}$ \tcp*{Canonical forms for isomorphism handling}

        \tcp{(3) Support counting (frequent subtree mining)}
        $\text{supp}(\cdot) \gets \Mine(\mathcal{K},\ \text{over all }C'\in\mathcal{C})$\;
        $\mathcal{F} \gets \{K\in\mathcal{K}\mid \text{supp}(K)\ge \tau\}$ \tcp*{Frequent patterns}

        \tcp{(4) Tokenization and interface slots}
        \ForEach{$K \in \mathcal{F}$}{
            $g \gets \text{RepresentativeSubtree}(K)$\;
            $\sigma_V,\sigma_E \gets \Slots(g)$ \tcp*{Node/edge slots for interfaces}
            $\mathcal{T}_{\text{tree}} \gets \mathcal{T}_{\text{tree}} \cup \{\Assemble(g,\sigma_V,\sigma_E)\}$\;
        }

        \tcp{(5) Residuals: pieces not covered by frequent motifs}
        $\mathcal{U} \gets \text{MaximalUncoveredFragments}(C,\ \mathcal{F})$ \tcp*{after covering by $\mathcal{F}$}
        $\mathcal{R}_{\text{tree}} \gets \mathcal{R}_{\text{tree}} \cup \mathcal{U}$ \tcp*{minimal trees / single edges}
      }
      \Return{$\mathcal{T}_{\text{tree}},\ \mathcal{R}_{\text{tree}}$}
    }
\end{algorithm}

Each primitive $g_i \in \mathcal{P}(G)$ is a subgraph with node set $V_i$ and edge set $E_i$. For the acyclic part, we refine connected components into recurring \emph{tree motifs} using standard steps:

\begin{enumerate}
    \item \textbf{BC decomposition:} Decompose at articulation points via a block–cut (BC) decomposition \citep{tarjan1972articulation}.
    \item \textbf{Path compression:} Optionally compress degree-2 chains (a standard tree reduction).
    \item \textbf{Frequent subtree mining:} Canonicalize subtrees and select frequent patterns by minimum support, following frequent (sub)tree mining frameworks \citep{asai2003discovering}.
    \item \textbf{Canonical labeling:} Use canonical labeling for isomorphism handling \citep{mckay2014practical}.
\end{enumerate}

Residual fragments that do not meet the support threshold remain as minimal trees (or single edges).

\section{The NSGGM Method}

\subsection{Proofs}
\label{sec:NSGGM_proof}

\begin{proposition}[Legitimacy of Structural Partitioning]
\label{prop:partition-legit}
Let $G=(V,E)$ be any finite simple graph and let $\mathcal{C}(G)=\{c_1,\dots,c_p\}$ be a minimum cycle basis with edge sets $E(c_j)$. Define
\[
E_{\mathcal{C}}=\bigcup_{j=1}^{p} E(c_j),\qquad E_A=E\setminus E_{\mathcal{C}},\qquad
G_A=(V,E_A).
\]
Let $\mathcal{P}(G)=\mathcal{C}(G)\cup \mathrm{Components}(G_A)$ and enumerate $\mathcal{P}(G)=\{P_i=(V_i,E_i)\}_{i=1}^m$.
Then:
\begin{enumerate}
\item[\textnormal{(i)}] $\{P_i\}$ is an \emph{overlapping cover} of $G$: $\bigcup_i V_i=V$ and $\bigcup_i E_i=E$.
\item[\textnormal{(ii)}] Every $P_i$ is either a simple cycle $c_j$ or an induced connected acyclic subgraph (a tree component of $G_A$).
\item[\textnormal{(iii)}] If $v\in V_i\cap V_j$ or $e\in E_i\cap E_j$ for $i\neq j$, then the overlap corresponds to a genuine shared item of $G$ (i.e., no spurious duplication).
\item[\textnormal{(iv)}] The blueprint $S_G=\{t_i\}_{i=1}^m$ with tokens $t_i=(P_i,\sigma_V|_{V_i},\sigma_E|_{E_i})$ is well-defined; in particular, the slot maps $\sigma_V,\sigma_E$ can be computed deterministically from $G$ and $\mathcal{P}(G)$.
\end{enumerate}
\end{proposition}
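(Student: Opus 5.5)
We verify the four items in order, each directly from the definitions of $E_{\mathcal{C}}$, $E_A$ and $\mathcal{P}(G)$, together with two standard facts about minimum cycle bases.

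\emph{Item (i), cover property.}
\begin{itemize}
\item \textbf{Edges.} $E = E_{\mathcal{C}} \cup E_A$ by definition.
\begin{itemize}
\item Every edge of $E_{\mathcal{C}}$ lies in some $E(c_j)$.
\item Every edge of $E_A$ lies in exactly one connected component of $G_A$.
\item Hence $\bigcup_i E_i = E$.
\end{itemize}
\item \textbf{Vertices.} Since $G_A=(V,E_A)$ has vertex set $V$, every $v\in V$ (including isolated ones) lies in some component of $G_A$. So $\bigcup_i V_i=V$.
\end{itemize}
Note that components of $G_A$ may be single vertices. We keep them as trivial trees so that the cover is exact.

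\emph{Item (ii), shape of the primitives.}
\begin{itemize}
\item \textbf{Cycles.} We invoke the standard fact that an element of a minimum cycle basis (Horton/de Pina) is a simple cycle. If $c_j$ were a nontrivial edge-disjoint union of cycles, replacing it by a strictly shorter constituent cycle would preserve the span and reduce the total weight. This contradicts minimality.
\item \textbf{Acyclicity of $G_A$.} The edges of $E_{\mathcal{C}}$ cover the cycle space, which we unpack as follows.
\begin{itemize}
\item Suppose $C$ were a cycle using only edges of $E_A$. Its incidence vector is a GF(2) sum of basis cycles.
\item Each basis cycle avoids $E_A$ entirely.
\item Hence the sum is supported on $E_{\mathcal{C}}$, which contradicts $C\subseteq E_A$ being nonempty.
\end{itemize}
So every component of $G_A$ is a tree.
\item \textbf{Induced in $G_A$.} Components are induced subgraphs of $G_A$. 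We read ``induced'' in (ii) as induced in $G_A$, not in $G$, since a ring bond between two tree vertices is excluded by design.
\end{itemize}

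\emph{Item (iii), no spurious duplication.}
\begin{itemize}
\item Each $P_i$ is a subgraph of $G$ with $V_i\subseteq V$ and $E_i\subseteq E$ taken literally, not as copies. Hence any element of $V_i\cap V_j$ or $E_i\cap E_j$ is by definition an element of $V$ or $E$.
\item Edge overlaps between a tree component and a cycle are impossible, since $E_A\cap E_{\mathcal{C}}=\varnothing$.
\item Edge overlaps between two trees are impossible, since components are disjoint.
\item Therefore shared edges occur only between cycles, and shared vertices only at genuine atoms.
\end{itemize}

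\emph{Item (iv), well-definedness of the blueprint.}
\begin{itemize}
\item $\sigma_V(v,g_i)$ is a function of $\mathrm{elem}(v)$, $\mathrm{val}(v)$ and the bond orders of the edges in $E_i(v)$. All of these are determined by $G$ and $P_i$.
\item $\sigma_E(e,g_i)=\mathrm{bo}(e)$ depends only on $e$.
\item The nonnegativity $r_{i,b}(v)\ge 0$ holds because $E_i(v)\subseteq E(v)$ and $G$ is chemically valid, so $\sum_{e\in E(v)}\mathrm{bo}(e)\le\mathrm{val}(v)$.
\end{itemize}

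\emph{Main obstacle: deterministic choice of the cycle basis.} A minimum cycle basis need not be unique, so $\mathcal{P}(G)$ itself is only determined once a basis is fixed. I would handle this by stating that (iv) is relative to the chosen $\mathcal{C}(G)$, and by fixing a deterministic tie-breaking rule, e.g.\ a lexicographic order on edge labels in de Pina's algorithm. With that rule, the whole construction is a deterministic function of the labeled graph $G$.
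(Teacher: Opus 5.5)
Your proposal is correct and follows the same item-by-item unfolding of the definitions as the paper's proof. It is more careful in places: you cover isolated vertices via the components of $G_A$, which the paper's appeal to edge endpoints does not reach, and you actually derive the acyclicity of $G_A$ from the spanning property of the basis, which the paper only asserts.

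Two small refinements. In the simple-cycle argument, the exchange preserves the span only if you replace $c_j$ by a constituent whose basis expansion involves $c_j$, not an arbitrary one. Your hedge on ``induced'' is unnecessary: the same spanning argument shows every edge of $E_A$ is a bridge, so an edge of $G$ joining two vertices of one tree component would close a cycle through a bridge; hence the components are induced in $G$ itself.
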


\begin{proof}
\textbf{(i)} By construction, $E=E_{\mathcal{C}}\cup E_A$ and the union is disjoint. The family $\{E(c_j)\}_j$ covers $E_{\mathcal{C}}$, while $\mathrm{Components}(G_A)$ covers $E_A$. Hence $\bigcup_i E_i=E$. Since every edge’s endpoints are included, $\bigcup_i V_i=V$ follows.

\textbf{(ii)} Each $c_j\in\mathcal{C}(G)$ is a simple cycle by definition of a cycle basis. Deleting $E_{\mathcal{C}}$ breaks all remaining cycles, so $G_A=(V,E_A)$ is acyclic; its connected components are trees.

\textbf{(iii)} If an item of $G$ appears in two primitives, it is because that item is simultaneously part of a cycle and of another cycle (fused cycles) or it lies on the interface between a cycle and a tree component (or between fused cycles). In all cases, the overlap is an actual vertex/edge of $G$ present in both subgraphs, not an artifact of the construction.

\textbf{(iv)} The slot assignments $\sigma_V,\sigma_E$ are functions of local degrees in $G$ and in each primitive, plus the fused/non-fused tag for cycle primitives (Section~\ref{sec:Graph_Decomposition}). Since these are determined from $(G,\mathcal{P}(G))$ with no ambiguity, $t_i$ is uniquely defined for each $P_i$.
\end{proof}

\begin{corollary}[Canonical Witness for Reassembly]
\label{cor:witness}
Let $S_G$ be the blueprint of $G$ from Proposition~\ref{prop:partition-legit}. There exists an assignment $\{z_v\}_{v\in V_{\mathrm{slots}}}$ (take $z_v$ equal to the original global vertex ID of $v$) such that all hard constraints $\phi_{\text{hard}}$ in Section~\ref{sec:Assembly_Constraints} are satisfied. Thus, every training graph admits a feasible assembly consistent with its own blueprint.
\end{corollary}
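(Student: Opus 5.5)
We will prove the corollary by exhibiting the canonical witness explicitly and checking each clause of $\phi_{\text{hard}}$ in turn. For every slot atom $u\in V_i^{\mathrm{slot}}$ coming from primitive $P_i$, let $\iota(u)\in V$ be the vertex of $G$ it was copied from; this is well defined by Proposition~\ref{prop:partition-legit}(iii), since overlaps are genuine shared items of $G$. Set $z_u:=\iota(u)$ and, in terms of the Boolean merge variables of Section~\ref{sec:Assembly_Constraints},
\[
m_{u,v}:=\mathrm{True}\iff u\in V_i,\ v\in V_j,\ i\neq j,\ \iota(u)=\iota(v).
\]
We then decode the graph induced by this identification and argue it is exactly $G$. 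By Proposition~\ref{prop:partition-legit}(i), every vertex and edge of $G$ appears in some primitive, so nothing is missing. Since each $E_i\subseteq E$, no spurious edges are introduced, so the quotient equals $G$.

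The checks proceed clause by clause. First, the distinct-primitive requirement holds by definition of $m_{u,v}$. Second, element equality holds because $\mathrm{elem}(u)=\mathrm{elem}(\iota(u))=\mathrm{elem}(\iota(v))=\mathrm{elem}(v)$. Third, edge merges: if $e=(u,nu)\in E_i$ and $e'=(v,nv)\in E_j$ are copies of the same edge of $G$, then both endpoint pairs are merged, because $\iota$ preserves endpoints, giving a consistent orientation. Their bond orders coincide since both equal $\operatorname{bo}$ of the original edge, which is the value recorded in $\sigma_E$ by Proposition~\ref{prop:partition-legit}(iv). Conversely, merged endpoints never force an edge merge of non-identical edges, because $G$ is simple.

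The main obstacle is the valency clause $\deg_i(u)+\deg_j(v)\le\mathrm{cap}(\mathrm{elem}(u))$. The difficulty is that a shared edge contributes to both $\deg_i$ and $\deg_j$, so a naive sum double counts. We would handle this by reading the constraint as the paper's appendix does: a shared edge's bond order is subtracted once from the valency cap (``$bo(e)$ can be subtracted from valency cap''). Under that reading, for $x=\iota(u)$ the total bond order assembled at $x$ over all primitives containing it, with shared edges counted once, equals $\sum_{e\ni x}\operatorname{bo}(e)$ in $G$. This sum is at most $\operatorname{val}(x)=\mathrm{cap}(\mathrm{elem}(x))$ because $G$ is a valid molecule; equivalently, every residual $r_{i,b}(v)\ge 0$ in $\sigma_V$ stays nonnegative after merging. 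In the pairwise form, this follows because the edge sets of $P_i$ and $P_j$ at $x$, after removing shared ones, are disjoint subsets of the edges at $x$. For fused cycles we would check this explicitly; a vertex lying in three or more primitives is handled by the same disjointness argument applied to the union.

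Finally, since every clause is satisfied, the witness is a model of $\phi_{\text{hard}}$, and every training graph admits a feasible assembly from its own blueprint.
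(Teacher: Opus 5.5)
Your proposal is correct and takes the same route as the paper: the canonical witness $z_v=\mathrm{id}_G(v)$, checked clause by clause. It is in fact more complete, because the paper's proof checks only subgraph integrity, edge-slot matching and type equality, and never addresses the valency cap. Your reinterpretation of that clause is forced rather than optional: read literally for every merged pair, $\deg_i(u)+\deg_j(v)\le\mathrm{cap}(\mathrm{elem}(u))$ already fails for the witness at a ring-fusion carbon of Kekul\'e naphthalene ($3+3=6>4$ when the fusion bond is double), and even the appendix's clause (iv), which subtracts only the merged edge's order, fails at an atom carrying two shared edges (e.g.\ $3+3-1=5>4$ inside a shared unsaturated bridge of barrelene), so you should state explicitly that you verify the corrected bound with every shared edge at $x$ subtracted, which is exactly what your disjointness argument proves.
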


\begin{proof}
Assign each local copy $v\in V_i$ the label $z_v:=\mathrm{id}_G(v)$ given by its vertex in $G$. Subgraph integrity holds because distinct local copies within the same primitive correspond to distinct vertices. Edge–slot matching holds since each shared edge of $G$ appears with the same endpoints across primitives. Type-equality is immediate because copies refer to the same underlying item of $G$. Hence $\phi_{\text{hard}}$ is satisfied.
\end{proof}

\begin{theorem}[Soundness of Assembly Under $\phi_{\text{hard}}$]
\label{thm:soundness}
Fix a blueprint $S'=\{t_1,\dots,t_k\}$ and any assignment $\{z_v\}_{v\in V_{\mathrm{slots}}}$ that satisfies all hard constraints $\phi_{\text{hard}}$ of Section~\ref{sec:Assembly_Constraints}. Define an equivalence relation on slot-copies by $v\sim v'\iff z_v=z_{v'}$ and let $[v]$ denote its classes. Construct a graph $G'=(V',E')$ with $V'=\{[v]: v\in V_{\mathrm{slots}}\}$ and with edges induced from primitive edges subject to edge–slot matching. Then:
\begin{enumerate}
\item[\textnormal{(i)}] $G'$ is a well-defined simple graph (no self-loops or parallel edges are introduced by merging).
\item[\textnormal{(ii)}] The embedding of each primitive $P_i$ into $G'$ is injective on its vertices and edges (subgraph integrity).
\item[\textnormal{(iii)}] If two local copies are merged, their discrete type fields match exactly (type consistency).
\end{enumerate}
If, in addition, $S'$ is molecular and the molecular hard clauses hold (element consistency, shared-edge order consistency, and exact valency balance), then:
\begin{enumerate}
\item[\textnormal{(iv)}] Every merged vertex in $G'$ has element type well-defined and unique.
\item[\textnormal{(v)}] For each atom $u\in V'$, the total bond order of edges incident to $u$ equals its prescribed valency; in particular, no deficit or surplus valence occurs.
\item[\textnormal{(vi)}] Any edge $e$ occurring in multiple primitives has a single bond order in $G'$.
\end{enumerate}
Consequently, $G'$ is a topologically valid graph; in the molecular case it is chemically valid by construction.
\end{theorem}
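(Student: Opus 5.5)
The plan is to prove the theorem directly from the construction, treating $\phi_{\text{hard}}$ as a list of local clauses and checking each conclusion clause by clause. First, $\sim$ defined by $z_v=z_{v'}$ is an equivalence relation, since it is the kernel of a function. So $V'$ is a well-defined set of classes. Edges of $G'$ are defined as the set of unordered pairs $\{[u],[w]\}$ for which some primitive edge $(u,w)\in E_i$ exists. Because $E'$ is a \emph{set} of pairs, parallel edges cannot arise. Two copies of the same primitive edge, or two edge-slots identified by edge-slot matching, are both sent to the same pair.

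The first substantive step is (ii), subgraph integrity. Here I would use the first hard clause: $m_{u,v}$ may be true only when $u\in V_i$ and $v\in V_j$ with $i\neq j$. In the $z$-formulation, this says distinct copies inside one primitive receive distinct labels. Hence the map $V_i\to V'$, $u\mapsto[u]$, is injective. Edge injectivity follows because distinct edges of a simple primitive have distinct endpoint pairs. Part (i) then follows from (ii): a self-loop would require $[u]=[w]$ for an edge $(u,w)\in E_i$, and this contradicts injectivity on $V_i$.

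Parts (iii) and (iv) come from the clause $m_{u,v}\Rightarrow \mathrm{elem}(u)=\mathrm{elem}(v)$ and the analogous equality clauses on the other type fields. These clauses only cover pairs that are merged directly, so I would argue by transitivity along chains of merged pairs inside a class. The consequence is that every class has a single element label, and $\mathrm{elem}([v])$ is well defined.

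Part (vi) uses the edge-merge clause. Two edge-slots $e\in E_i$ and $e'\in E_j$ land on the same pair of $G'$ only if their endpoints are merged consistently. The hard constraint then forces $\mathrm{bo}(e)=\mathrm{bo}(e')$, again extended by transitivity across all copies. So every $e\in E'$ carries one bond order.

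Part (v), the valency claim, is the main obstacle. The stated capacity clause $\deg_i(u)+\deg_j(v)\le \mathrm{cap}(\mathrm{elem}(u))$ is pairwise and an inequality, whereas (v) asserts exact equality at every class. My approach has two steps:

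1. Write the incident bond order at $[u]$ as $\sum_{e\in E'([u])}\mathrm{bo}(e)$. Decompose it as a sum of the internal degrees $\deg_i$ over the copies in the class, minus the overcounts from shared edges. By (vi) and the edge-merge rule, each shared edge is counted exactly once after merging, so the subtraction is exact.
2. Invoke the "exact valency balance" clause that the theorem lists among the molecular hard clauses, deferred to Appendix~\ref{sec:hard_constraints_full}. I expect it to state that this class-level sum equals $\mathrm{val}$, with residual capacities $r_{i,b}$ being consumed exactly.

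If that appendix clause is only pairwise, I would first show it composes over a class. The residual-capacity bookkeeping in $\sigma_V$ makes each merge subtract exactly the bond orders it contributes, and an induction on the number of copies in the class then yields equality.

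Combining (i)–(vi) gives topological validity, and in the molecular case chemical validity.
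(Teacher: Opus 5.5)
Your plan is essentially the paper's proof, which likewise reads each of (i)--(vi) directly off a hard clause. It treats subgraph integrity as the class-level condition $z_{v_a}\neq z_{v_b}$ for distinct copies in one primitive, and it obtains (v) straight from the assumed exact-valency-balance clause, exactly as in your primary route.

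Two of your side remarks need correcting.

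\emph{Subgraph integrity.} Step (ii) needs the class-level reading the paper uses. It does not follow from the direct-merge reading you adopt in (iii): transitivity propagates equalities but not disequalities, so a chain $u\in P_1$, $w\in P_2$, $v\in P_1$ would collapse $u$ and $v$.

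\emph{Fallback for (v).} This would not work. The valency clauses actually written in the appendix are only pairwise inequalities such as $\deg_i(u)+\deg_j(v)\le\mathrm{cap}(\mathrm{elem}(u))$. No induction turns these into a class-level equality, or even a class-level bound: three carbon copies of internal degree $2$ satisfy every pairwise cap yet carry total bond order $6$. So (v) genuinely rests on the hypothesis stated in the theorem.
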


\begin{proof}
\textbf{(i)} Define $V'$ as the quotient by $\sim$. By \emph{subgraph integrity}, two distinct vertices of the same primitive never merge, so a primitive cannot collapse onto itself. By \emph{edge–slot matching}, whenever a structural edge is realized by merging endpoints from two primitives, its endpoints map to distinct equivalence classes (no self-loop). Parallel edges cannot arise because matched edge-slots are required to pairwise realize the same underlying adjacency; any duplicate realization would violate the matching condition.

\textbf{(ii)} Subgraph integrity is precisely the clause $z_{v_a}\neq z_{v_b}$ for distinct $v_a,v_b$ of the same primitive; thus the primitive’s vertex set injects into $V'$ and its edges map injectively into $E'$.

\textbf{(iii)} This is exactly the type-equality constraint: $z_{v_i}=z_{v_j}\Rightarrow \sigma_V^{\mathrm{type}}(v_i)=\sigma_V^{\mathrm{type}}(v_j)$.

\textbf{(iv)–(vi) Molecular case.} Element consistency gives a unique element type per class $[v]$. Shared-edge order consistency ensures any edge $e$ appearing in overlaps has a unique bond order. Finally, valency balance enforces
\[
\sum_{w:\ ([v], [w])\in E'} \!\!\!\!\!\operatorname{bo}([v],[w])\;=\;R(v)
\]
for each class $[v]$, where $R(v)$ is the total residual contributed by all local copies of that atom. Since residuals are defined as $\operatorname{val}(v)-$ (intramolecular bond order already inside primitives), this equality guarantees that inter-token bonds exactly saturate the valence: neither deficits nor over-saturation can occur.
\end{proof}

\begin{corollary}[Correctness by Construction]
\label{cor:cbc}
If $S'$ is obtained from a valid graph $G$ via the decomposition of Section~\ref{sec:Graph_Decomposition}, then by Corollary~\ref{cor:witness} there exists a satisfying assignment of $\phi_{\text{hard}}$ that reconstructs a graph $G^\star$ isomorphic to $G$. Conversely, by Theorem~\ref{thm:soundness}, any satisfying assignment yields a valid $G'$; in the molecular setting, $G'$ also satisfies element consistency and valency exactly. Hence the \emph{hard} constraints are sound (no invalid outputs) and complete for reassembling training graphs (every training graph has a satisfying witness).
\end{corollary}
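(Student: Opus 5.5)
The plan is to obtain Corollary~\ref{cor:cbc} by combining Corollary~\ref{cor:witness} (existence) with Theorem~\ref{thm:soundness} (soundness). The only new ingredient is the isomorphism claim $G^\star\cong G$, which neither earlier result states explicitly. I would split the argument into a completeness half and a soundness half.

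\emph{Completeness.} Let $S'=S_G$ be the blueprint of a valid graph $G$. Corollary~\ref{cor:witness} gives an assignment with $z_v=\mathrm{id}_G(v)$ that satisfies $\phi_{\text{hard}}$. Form the quotient graph $G^\star=(V^\star,E^\star)$ as in Theorem~\ref{thm:soundness}, where two slot copies are equivalent exactly when they carry the same $z$-label. I would then define $\psi:V^\star\to V$ by $\psi([v])=\mathrm{id}_G(v)$. This map is well defined because the classes are exactly the fibres of $\mathrm{id}_G$. It is surjective by Proposition~\ref{prop:partition-legit}(i), since $\bigcup_i V_i=V$. It is injective by the definition of the classes.

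For edges, every primitive edge $(u,w)\in E_i$ maps to $(\mathrm{id}_G(u),\mathrm{id}_G(w))$, which lies in $E$ because Proposition~\ref{prop:partition-legit}(iii) guarantees that overlaps are genuine items of $G$. Conversely, every $e\in E$ lies in some $E_i$ by Proposition~\ref{prop:partition-legit}(i), so $\psi$ maps $E^\star$ onto $E$. Copies of the same edge in different primitives collapse to a single edge, because their endpoints share labels. Finally, element types and bond orders are copied from $G$, so $\psi$ preserves attributes. Hence $\psi$ is an attributed graph isomorphism.

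\emph{Soundness.} This half is a direct citation. For any satisfying assignment, Theorem~\ref{thm:soundness}(i)--(iii) give a simple graph with injective primitive embeddings. In the molecular case, (iv)--(vi) give element consistency, unique bond orders and exact valency. So no satisfying assignment yields an invalid output.

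\textbf{Main obstacle.} The hard part is the mismatch between the encodings. The corollaries speak of labels $z_v$, while Section~\ref{sec:Assembly_Constraints} uses Boolean merge variables $m_{u,v}$. I would bridge the two by setting $m_{u,v}:=[z_u=z_v]$ for $u,v$ in distinct primitives. I would then check each clause of $\phi_{\text{hard}}$ directly on $G$:
\begin{itemize}
\item element equality holds because merged copies are copies of the same atom;
\item shared-edge consistency holds because the two copies of an edge of $E_{\mathrm{sh}}$ have merged endpoints in the same orientation and the same $\operatorname{bo}$;
\item the valency clause holds because the internal bond orders of the copies of an atom $u$, counted once per distinct edge of $G$, sum to $\sum_{e\ni u}\operatorname{bo}(e)\le\operatorname{val}(u)$.
\end{itemize}

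The delicate point in the last clause is to avoid double-counting edges that are shared between a cycle and another primitive. I would handle this by counting over distinct edges of $G$, which relies on Proposition~\ref{prop:partition-legit}(iii).
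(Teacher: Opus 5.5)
Your proposal is correct and follows the paper's route: the paper justifies this corollary only by chaining Corollary~\ref{cor:witness} (completeness) with Theorem~\ref{thm:soundness} (soundness). Your explicit isomorphism $\psi([v])=\mathrm{id}_G(v)$ and the translation $m_{u,v}:=[z_u=z_v]$ spell out details the paper leaves implicit. One caution, which applies equally to the paper: the encoding mismatch you flag matters mainly in the soundness half, which you discharge by citing a theorem phrased for labels $z_v$, whereas the pairwise Boolean clauses of Section~\ref{sec:Assembly_Constraints} do not enforce transitivity, so two atoms of one primitive could be identified through an atom of another primitive, and pairwise valency caps do not bound an atom assembled from three or more copies.
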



\subsection{Constraints}

\paragraph{Hard constraints ($\phi_{\text{hard}}$):} Below are the formal hard constraints we enforce: \\
\label{sec:hard_constraints_full}

\emph{(i) Subgraph integrity.} Nodes from the same subgraph/token never merge. That is, $\forall u, v, \quad  m_{u,v} \implies u\in V_i, v\in V_j$ for some $i \neq j$ 

\emph{(ii) Element consistency.} Merged atoms must have equal element types:
\[
\forall u,v:\quad m_{u,v}\ \Rightarrow\ \mathrm{elem}(u)=\mathrm{elem}(v).
\]

\emph{(iii) Valency cap.} Let the internal (within-primitive) bond-order degree of an atom $u\in V_i$ be
\[
\deg_i(u)\;=\;\sum_{(u,x)\in E_i}\mathrm{bo}(u,x).
\]
If two node-slot atoms $u\in V_i^{\mathrm{slot}}$ and $v\in V_j^{\mathrm{slot}}$ are merged via $m_{u,v}$, then the resulting atom cannot exceed the
valency cap for its element:
\[
m_{u,v}\ \Rightarrow\ \deg_i(u)+\deg_j(v)\ \le\ \mathrm{cap}(\mathrm{elem}(u)).
\]
(Here $\mathrm{cap}(\cdot)$ is the allowed maximum valence for each element under the dataset’s chemistry conventions.)

\emph{(iv) Edge-slot merges (fused/shared edges).}
Edge sharing is only allowed for edges in $E_i^{\mathrm{slot}}$ and
$E_j^{\mathrm{slot}}$.  
Two edge-slot copies
$e=(u,v)\in E_i^{\mathrm{slot}}$ and
$e'=(u',v')\in E_j^{\mathrm{slot}}, i\neq j$
may be merged only if they (i) have the same bond order, and
(ii) their endpoints are merged consistently:
\[
\mathrm{bo}(u,v)=\mathrm{bo}(u',v')
\;\wedge\;
\big( m_{u,u'}\wedge m_{v,v'} \;\;\vee\;\; m_{u,v'}\wedge m_{v,u'} \big).
\]

When such an edge merge occurs, we enforce endpoint
element equality and valency caps \emph{accounting for the duplicated edge}:
\[
\begin{aligned}[t]
&(m_{u,u'} \wedge m_{v,v'}) \Rightarrow {}\\
&\qquad \mathrm{elem}(u)=\mathrm{elem}(u'),\\
&\qquad \deg_i(u)+\deg_j(u')-\mathrm{bo}(u,v)
   \le \mathrm{cap}(\mathrm{elem}(u)),\\
&\qquad \deg_i(v)+\deg_j(v')-\mathrm{bo}(u,v)
   \le \mathrm{cap}(\mathrm{elem}(v)).
\end{aligned}
\]
The swapped orientation $(m_{u,v'}\wedge m_{v,u'})$ uses the
same constraints.

This subtraction of $\mathrm{bo}(u,v)$ removes double-counting of the
shared bond: it appears once in each primitive but becomes a single edge
in $G'$.

\emph{(v) Edge-slot restriction.}
Edge merges are permitted only for edge-slot edges:
\[
\begin{aligned}
&\Big((u,v)\notin E_i^{\mathrm{slot}}\Big)
 \ \vee\ 
 \Big((u',v')\notin E_j^{\mathrm{slot}}\Big)
 \Rightarrow {}\\
&\qquad \neg\Big(
   (m_{u,u'}\wedge m_{v,v'})
   \ \vee\
   (m_{u,v'}\wedge m_{v,u'})
 \Big).
\end{aligned}
\]

In particular, an edge may be shared even if its endpoints are not node
slots, provided the edge itself lies in $E^{\mathrm{slot}}$.

\subsection{The \textsc{NSGGM} algorithms and constraints}
\label{sec:NSGGM_algo}

This appendix gives concise, implementation-ready pseudocode for training and inference.

\textbf{Soft Constraints For Connectivity:} 
\label{soft_constraint_connectivity}
Define $par[i]$ as the parent identified for subgraph token $i$ in $S'$. Define $\mathrm{active}_i = True$ if the $i$th subgraph token is used in the final molecule, that is, any $m_{u,v}=True$ where $u\in V_i$. To encourage connectivity, we use hard constraints to define a valid reachability structure; actual connectivity is encouraged only via soft rewards.  \\

Introduce Boolean reachability vars $\mathrm{reach}_i$, root indicators $\mathrm{isRoot}_i$, and integer distances $\mathrm{dist}_i$\;
Add hard constraints: if any $\mathrm{active}_i$ then exactly one root; roots are active and have $\mathrm{dist}=0$\;
 Add hard constraints: $\mathrm{active}_i \Rightarrow \mathrm{reach}_i$ and every reachable non-root $i$ has a reachable parent $j$ with $\mathrm{conn}_{i,j}$ and $\mathrm{dist}_i=\mathrm{dist}_j+1$\;
    Add soft rewards to $\mathcal{O}$ favoring larger connected assemblies:
    \[
      \max\ \sum_i w_{\mathrm{reach}}(i)\cdot \mathbf{1}[\mathrm{reach}_i]\;+\;\sum_{i<j} w_{\mathrm{conn}}\cdot \mathbf{1}[\mathrm{conn}_{i,j}],
    \]
    and if parents $par[i]$ are given, add an extra reward for $\mathrm{conn}_{i,par[i]}$.

\label{solver_alg}
\begin{algorithm}[H]
  \caption{SMT-based assembly of a blueprint}
  \label{alg:smt_assembly}
  \small
  \DontPrintSemicolon
  \SetKwProg{Fn}{Function}{}{end}
  \KwIn{Blueprint $S'=\{P_i\}_{i=1}^k$; optional $\phi_{\text{user}}$}
  \KwOut{Selected merges $\{m_{u,v}\}$}

  \SetKwFunction{Build}{BuildGlobalView}
  \SetKwFunction{NodeCands}{AddNodeMergeCandidates}
  \SetKwFunction{EdgeCands}{AddEdgeSlotCandidates}
  \SetKwFunction{Force}{AddForcedPrefixes}
  \SetKwFunction{Parse}{ParseLogic}

  \Fn{\textsc{AssembleSMT}($S',\phi_{user}$)}{
    $\mathcal{O} \gets \textsc{Z3Optimize}()$; $M\gets \emptyset$ \tcp*{$M$ maps candidate pairs to Booleans $m_{u,v}$}

    \tcp{(1) Candidate merges + local validity clauses}
    $M \gets \NodeCands(S',\mathcal{O})$\\
    $M \gets M \cup \EdgeCands(S', \mathcal{O})$\;

    \tcp{Forced Merges/Structures/Rules from user}

    \ForEach{user rule $r \in \phi_{\text{user}}$}{
      Identify the set of required atom pairs $\mathcal{F}(r)$ induced by $r$\;
        $\mathrm{prefix}_r \;\Leftrightarrow\; \bigwedge_{(u,v)\in \mathcal{F}(r)} m_{u,v}$
    }
    \If{$\phi_{user}$ provides logical constraints}{
      $O \gets \mathrm{parseLogic(constraint, prefix_i)}$ \tcp{where parselogic recursively parses the prefixes and S' required for logical constraints and adds corresponding clause for solver between them}
    }

    \ForEach{subgraph $i\in\{1,\dots,k\}$}{
      Define $\mathrm{active}_i \Leftrightarrow \bigvee\{m_{u,v}\in M : u\in P_i \text{ or } v\in P_i\}$\;
    }

    \tcp{Connectivity (reachability) + soft objective defined in ~\ref{soft_constraint_connectivity}}

    \tcp{Solve and decode}
    Solve $\mathcal{O}$ (Max-SMT)
    \Return{return all true $m_{u,v}$}
  }
\end{algorithm}

\paragraph{Practical notes.}
(i) \emph{Masking}: $\text{BuildFeasibilityMasks}$ precomputes per-position masks using only
local interface checks (degree stubs, fused-cycle tuples), which is linear in the number
of candidate tokens at each step. (ii) \emph{User control}: $\phi_{\mathrm{user}}$ may add
hard clauses (e.g., ring-count bounds, forbidden motifs) or weighted soft terms that are
optimized in the final SMT objective without retraining $M_\theta$. (iii) \emph{Caching}:
Since $\phi_{\mathrm{struct}}$ depends only on $\mathcal{V}$, its encoding can be cached
across runs.

\subsection{Training}
\label{sec:NSGGM_training}

\paragraph{ELBO.} We weight the contributions of the autoregressive
token decoder and the three structural heads by $\lambda_{\text{tok}},\lambda_{\pi},\lambda_{m},\lambda_{\text{meta}}$,
and weight the KL regularizer by $\beta$.

\begin{equation}
\begin{aligned}
\mathcal{L}_{\text{ELBO}}
=\;&
\underbrace{\mathbb{E}_{z \sim q_\phi(z\mid \mathcal{G})}\!\big[\mathcal{R}_\lambda(z)\big]}_{\text{generator loss}}
\\
&-
\underbrace{\beta\,\mathrm{KL}(q_\phi \,\|\, p_\phi)}_{\text{latent alignment}},
\end{aligned}
\label{eq:elbo_main_icml}
\end{equation}

\noindent Here $\mathrm{KL}(q_\phi \,\|\, p)$ denotes
\[
\mathrm{KL}\!\left(q_\phi(z\mid\mathcal{G})\,\|\,p(z)\right),
\qquad \text{with } p(z)=\mathcal{N}(0,I).
\], and the weighted reconstruction term is
\begin{equation}
\mathcal{R}_\lambda(z)=
\begin{aligned}[t]
&\underbrace{\lambda_{\text{tok}}\mathcal{L}_{\text{tok}}(z)}_{\text{AR tokens}} \\
&\quad+\;\underbrace{\lambda_{\pi}\mathcal{L}_{\pi}(z)}_{\text{parent pointers}} \\
&\quad+\;\underbrace{\lambda_{m}\mathcal{L}_{m}(z)}_{\text{merge/interface}} \\
&\quad+\;\underbrace{\lambda_{\text{meta}}\mathcal{L}_{\text{meta}}(z)}_{\text{metadata}}.
\end{aligned}
\label{eq:recon_heads_icml}
\end{equation}
\noindent

The per-head log-likelihoods are
\begin{equation}
\mathcal{L}_{\{\text{tok},\pi,m,\text{meta}\}}(z)
=
\sum_{\ell=1}^{L}\log p_\theta(x_\ell \mid c_\ell),
\label{eq:head_losses}
\end{equation}
where
\begin{equation}
(x_\ell,c_\ell)\in
\left\{
\begin{array}{ll}
(g_\ell,\ (g_{<\ell},z)), \\
(\pi_\ell,\ (\mathcal{T},z)), \\
(m_\ell,\ (\mathcal{T},z)), \\
(t_\ell,\ (\mathcal{T},z)).
\end{array}
\right.
\end{equation}

\section{Dataset details}
\label{sec:dataset}

\subsection{Molecular graph datasets}

QM9 \cite{wu2018moleculenetbenchmarkmolecularmachine}, MOSES and GuacaMol are molecular datasets, where nodes represent atoms and edges correspond to bonds. 
Planar dataset \cite{Martinkus2022} is a non-molecular graph. 

\paragraph{QM9.}
QM9 is a supervised, property-rich benchmark of small organic molecules (up to nine heavy atoms among \{C,N,O,F\}), each paired with a single low-energy 3D conformation and a suite of quantum-chemical targets (e.g., dipole moment, polarizability, energies).
Its compact chemical space and dense labels make it ideal for studying message passing, geometry-aware encoders, and multi-task regression.
QM9’s strength is label depth (node/edge/3D information and many targets), not breadth of chemotypes; it is less suitable for evaluating large-scale distribution learning.

\paragraph{MOSES.}
MOSES is a large, cleaned corpus for generative modeling drawn from drug-like ZINC subsets with standardized filters (e.g., MW, logP, allowed atom types).
It ships with train/test splits and a unified metric suite (validity, uniqueness, FCD, KL, internal diversity), encouraging apples-to-apples comparison across unconditional generators.
Unlike QM9, MOSES prioritizes scale and distributional realism over supervised labels; molecules are provided primarily as SMILES without per-atom/bond targets.

\paragraph{GuacaMol.}
GuacaMol is a broad, medicinal-chemistry benchmark that evaluates both distribution learning and goal-directed generation (e.g., multi-objective property optimization, rediscovery).
It emphasizes realistic scaffold diversity and challenge tasks rather than supervised labels, providing standardized splits, metric implementations, and leaderboards.
Practically, the dataset includes many chemically complex structures.

\section{Experimental Details}
\label{sec:setup}

\subsection{Unconstrained Generation Details.}
\label{app:unconstrained_setup_details}

\paragraph{Hardware.}
Unconstrained generation experiments were ran on CPU-based compute only. Specifically, a \textit{Google Colab CPU runtime} (Google Compute Engine VM) with an AMD EPYC 7B12 CPU (8 vCPUs) and 50 GB RAM, running Linux 6.6.105.

\paragraph{Throughput.} 
Below (Table \ref{tab:throughput}) we report the throughput on each benchmark, rounded to the nearest 10 molecules per second due to run-to-run variability.
\begin{table}[h]
  \centering
  \caption{Inference throughput (mol/s) on the CPU runtime. Values are rounded to the nearest 10 mol/s.}
  \label{tab:throughput}
  \setlength{\tabcolsep}{6pt}
  \renewcommand{\arraystretch}{0.95}
  \footnotesize
  \begin{tabular}{lccc}
    \toprule
    \textbf{Metric} & \textbf{QM9} & \textbf{GuacaMol} & \textbf{MOSES} \\
    \midrule
    Inference (mol/s) & 20 & 40 & 100 \\
    \bottomrule
  \end{tabular}
  \vspace{-6pt}
\end{table}

\subsection{Unconstrained Generation Examples.}
\label{app:unconstrained_details}

\begin{figure}[h!]
    \centering
    \includegraphics[width=\linewidth]{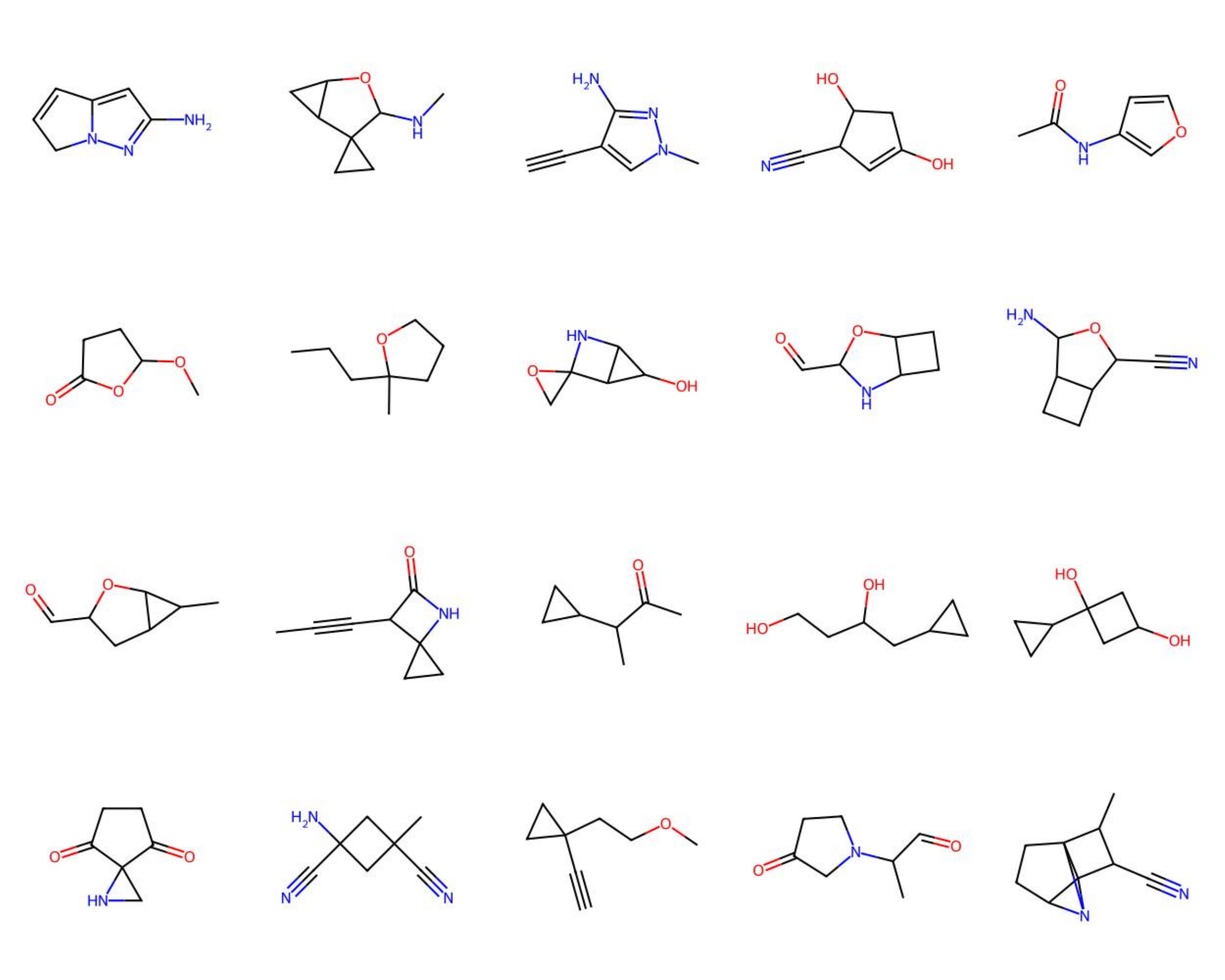}
    \caption{Example outputs from QM9 implicit. Samples were randomly chosen.}
    \label{fig:qm9_implicit_examples}
\end{figure}

\begin{figure}[h!]
    \centering
    \includegraphics[width=\linewidth]{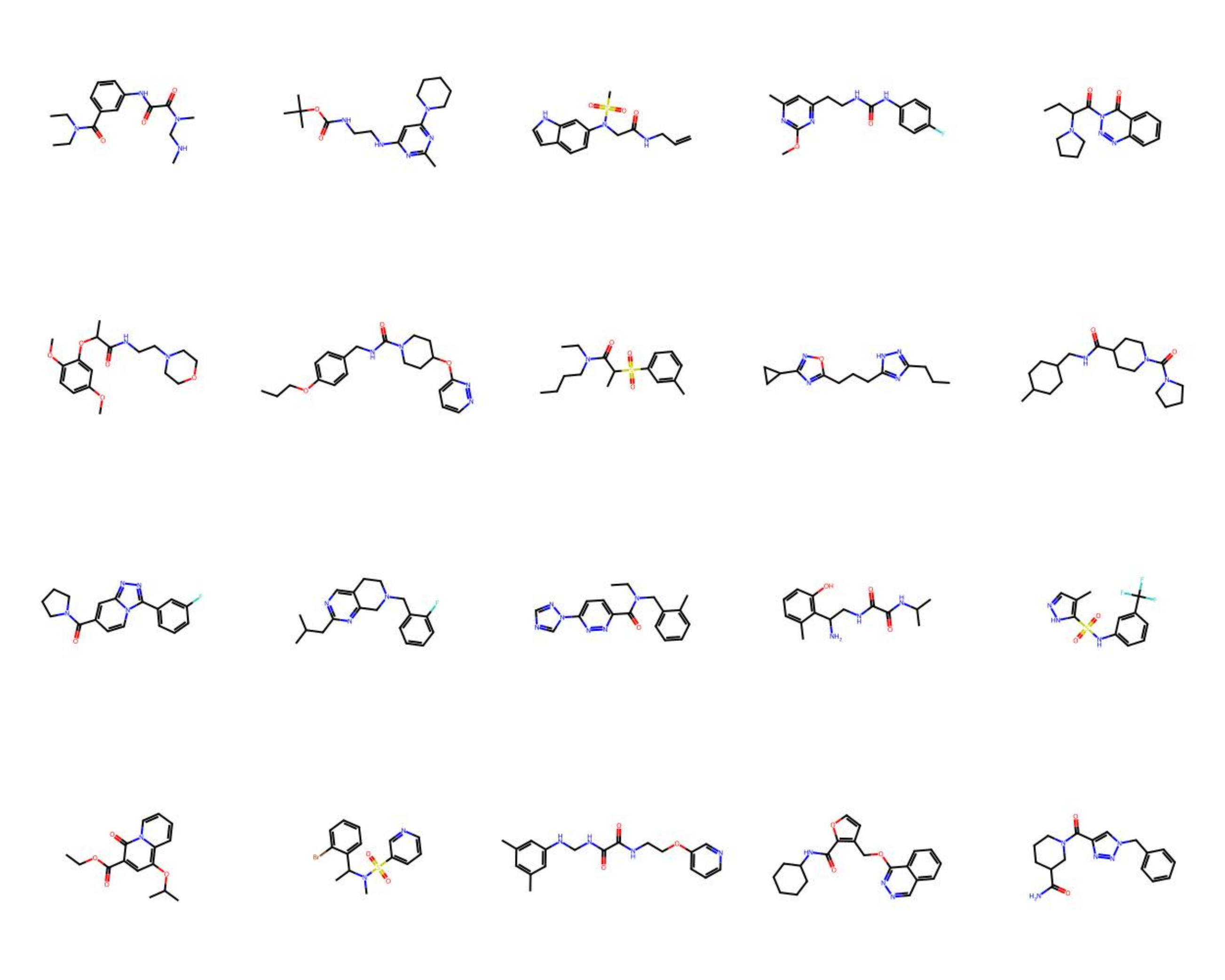}
    \caption{Example outputs from MOSES. Samples were randomly chosen.}
    \label{fig:moses_example}
\end{figure}

\begin{figure}[h]
    \centering
    \includegraphics[width=\linewidth]{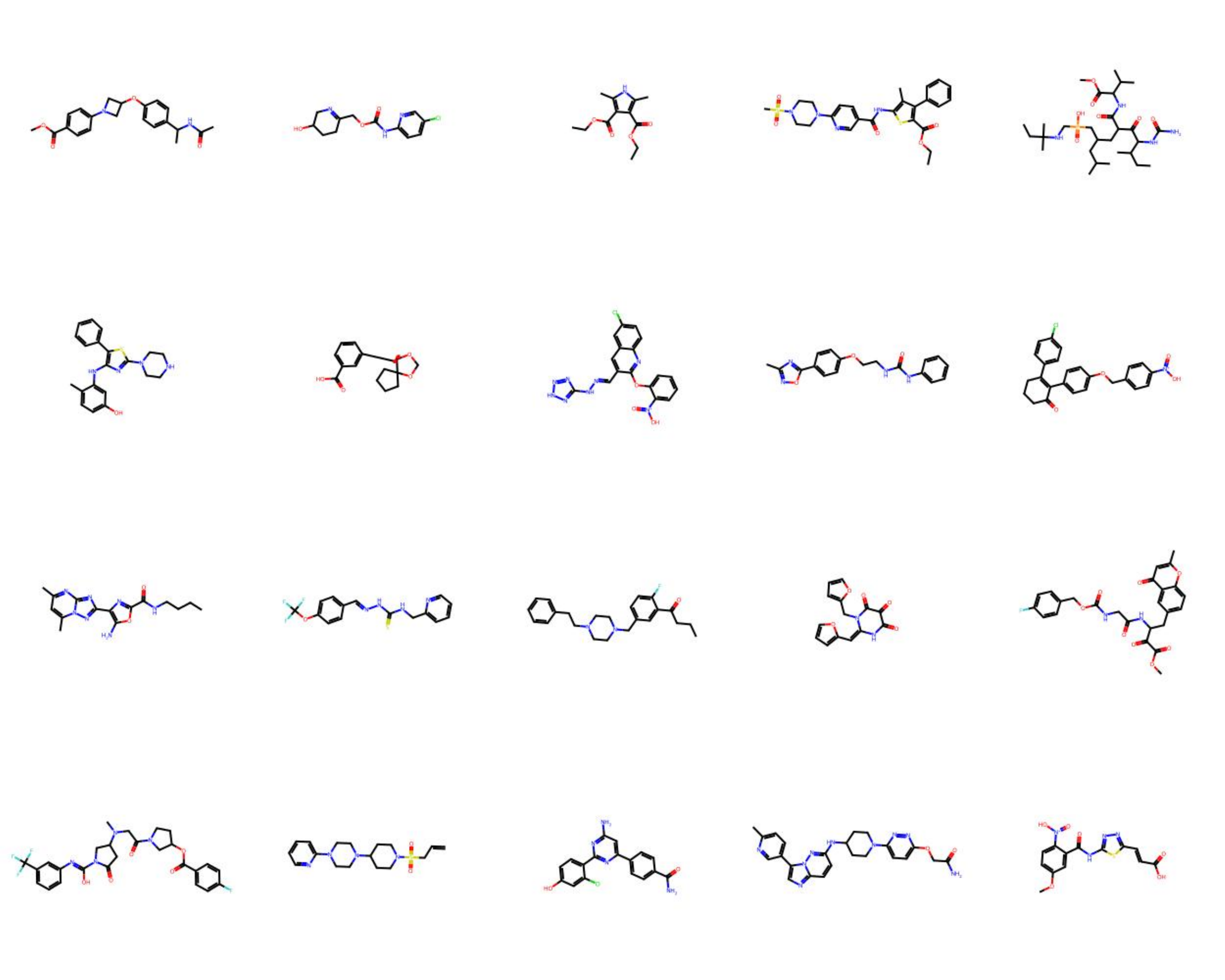}
    \caption{Example outputs from GuacaMol. Samples were randomly chosen.}
    \label{fig:guacamol_example}
\end{figure}

\subsection{Logical Constraints Satisfaction Specifications}
\label{app:details}
\paragraph{Overview.}
We define formulations that evaluate global logical constraints over \emph{substructure presence} in candidate molecules. Each instance is a propositional formula where each atom correspond to whether a molecule contains the corresponding scaffold. The tests contain: (i) set of drug-like scaffolds (6--7 distinct pieces per formula) while remaining non-trivial, (ii) genuinely global/non-local logic via XOR parity, IFF statements, implications with composite antecedents, and conditional negations, (iii) "sequential picking" behavior, where scaffold choice constrains compatible substituent families, and finally (iv) four instances: two moderately difficult satisfiable, higher-coverage constraints; one difficult and satisfiable but near-zero coverage on broad drug-like scaffolds; and one pure-logic UNSAT sanity check. \textbf{We emphasize that these logical expressions are not intended as medicinal-chemistry specifications or direct applicable drug-discovery constraints. These serve as controlled, synthetic, stress tests for evaluating our model.
}
\paragraph{Objects: pieces, atoms, and expression language.}
A \emph{piece} is a named scaffold which can be composed by some \textit{motifs}, $g_1, ..., g_n$, where $g_i \in \mathcal{V}$. Let $S=\{s_1,\dots,s_n\}$ denote the set of pieces. Each piece $s_i$ induces a Boolean atom $x_i$ which is interpreted as ``the output molecule, $G'$, contains $s_i$.`` Formulas are built over these atoms using Boolean operators: negation $\lnot$, conjunction $\land$, and disjunction $\lor$. No cardinality primitives are assumed; all ``exactly-one'' constraints are expressed compositionally (via XOR).

\paragraph{Core scaffolds.} The core scaffolds are single-ring, non-fused heteroaromatics:
\begin{itemize}
  \item \textbf{Pyridine}: (\textit{PubChem CID 1049})
  \item \textbf{Pyrimidine} (\textit{PubChem CID 9260})
  \item \textbf{Imidazole}: 
  (\textit{PubChem CID 795})
  \item \textbf{Thiazole}:(\textit{PubChem CID	9256})
\end{itemize}

\paragraph{Functional motifs (branches).}
The functional motifs are compact, generic patterns. Their corresponding SMILES string is noted below:
\begin{itemize}
  \item \textbf{Nitrile}: \texttt{C\#N}
  \item \textbf{Trifluoromethyl}: \texttt{C(F)(F)F}
  \item \textbf{Amide}: \texttt{C(=O)N}
  \item \textbf{Sulfonamide}: \texttt{S(=O)(=O)N}
  \item \textbf{Tert-Butyl group}: \texttt{CC(C)(C)C}
\end{itemize}

\paragraph{Logical operators macros.}
We define the following operators over Boolean expressions $a,b$:
\begin{align}
  \mathrm{IMP}(a,b) &:= \lnot a \lor b \\
  \mathrm{XOR2}(a,b) &:= (a \lor b)\land \lnot(a \land b) \\
  \mathrm{IFF}(a,b) &:= \mathrm{IMP}(a,b)\land \mathrm{IMP}(b,a).
\end{align}
All benchmarks below are expressed using only $\lnot,\land,\lor$ plus these macros.

\paragraph{Assignment mapping and expression satisfaction.}
Given a candidate molecule $m$ and piece set $S$, we map $m$ to a Boolean assignment $\alpha_m=(x_1(m),\dots,x_n(m))$ through its presence in the final molecule i.e.
\[
x_i(m)=1 \iff m \text{ contains the substructure query } s_i.
\]
A molecule $m$ \emph{satisfies} a benchmark formula $\varphi$ if $\varphi(\alpha_m)=\mathrm{True}$. Then, we state $m\models \varphi$.

\paragraph{Expression 1}$({\varphi_1})$:\\
Scaffolds used (7): \textit{Pyridine} ($P$), \textit{Pyrimidine} ($Q$), \textit{Nitrile} ($N$), \textit{Trifluoromethyl} ($F$), \textit{Amide} ($A$), \textit{Sulfonamide} ($S$), \textit{Tert-Butyl} ($T$).
We define three exactly-one selections:
\begin{align}
  \mathrm{CORE12\_X} &:= \mathrm{XOR2}(P,Q),\\
  \mathrm{EWG\_X} &:= \mathrm{XOR2}(N,F),\\
  \mathrm{POL\_X} &:= \mathrm{XOR2}(A,S).
\end{align}
Full expression:
\begin{align}
\varphi_{1} :=\;& \mathrm{CORE12\_X}\land \mathrm{EWG\_X}\land \mathrm{POL\_X} \nonumber\\
&\land \mathrm{IMP}\!\left(P,\; N \;\lor\; (F\land T)\right) \nonumber\\
&\land \mathrm{IMP}\!\left(Q,\; F \;\lor\; (N\land A)\right) \nonumber\\
&\land \mathrm{IMP}(T,S) \nonumber\\
&\land \mathrm{IMP}(A,\lnot T).
\end{align}
Intuitively, scaffold choice gates feasible EWG/polar modes, and bulk steers the polar choice which results in global coupling despite the small number of atoms.

\paragraph{Expression 2}$(\varphi_2):$\\
Scaffolds used (6): Imidazole ($I$), Thiazole ($H$), Nitrile ($N$), Amide ($A$), Sulfonamide ($S$), \emph{tert}-Butyl ($T$).
This expression enforces exactly-one core and exactly-one polar:
\begin{align}
  \mathrm{CORE34\_X} &:= \mathrm{XOR2}(I,H),\\
  \mathrm{POL2\_X} &:= \mathrm{XOR2}(A,S).
\end{align}
Full expression:
\begin{align}
\varphi_{\mathrm{B2}} :=\;& \mathrm{CORE34\_X}\land \mathrm{POL2\_X} \nonumber\\
&\land \mathrm{IFF}(T,S) \nonumber\\
&\land \mathrm{IMP}(A,N) \nonumber\\
&\land \mathrm{IMP}\!\left(H,\; A\land \lnot T\right) \nonumber\\
&\land \mathrm{IMP}\!\left(I,\; S\lor N\right).
\end{align}
This instance pushes on IFF wiring and implications with composite consequents.

\paragraph{Expression 3} ($\varphi_3$):
Scaffolds used (7): \textit{Pyridine} ($P$), \textit{Pyrimidine} ($Q$), \textit{Thiazole} ($H$), \textit{Nitrile} ($N$), \textit{Trifluoromethyl} ($F$), \textit{Amide} ($A$), \textit{Sulfonamide} ($S$).
This expression requires the presence of three cores simultaneously, then select exactly one EWG and one polar, and finally lock consistent modes:
\begin{align}
\varphi_{\mathrm{B3}} :=\;& (P\land Q\land H) \nonumber\\
&\land \mathrm{XOR2}(N,F)\land \mathrm{XOR2}(A,S) \nonumber\\
&\land \mathrm{IFF}(A,N)\land \mathrm{IFF}(S,F).
\end{align}
Although it is logically satisfiable, the multi-core conjunction is intended to have a low hit rate in the training dataset.

\paragraph{Expression 4 (UNSAT)} $(\varphi_{UNSAT}):$\\
Scaffolds used (3): Pyridine ($P$), Pyrimidine ($Q$), Imidazole ($I$). The expression is:
\begin{align}
\varphi_{\mathrm{UNSAT}} :=\;& \mathrm{XOR2}(P,Q)\land \mathrm{XOR2}(Q,I)\land \mathrm{XOR2}(P,I).
\end{align}
This formula is unsatisfiable as pairwise XOR constraints among three variables cannot hold simultaneously. This serves as a sanity check of the logical layer.



\paragraph{Experiment chemistry constraints.}
Across all experiments, we apply a valency-only feasibility check. Bonds contribute their nominal order to the atom valence. Aromatic bonds are counted as 1.5. Default atomic valence limits follow common organic chemistry conventions (i.e, H = 1, C = 4, N = 3 / 5 with charge, ...). Formal charges adjust the allowed valence accordingly.

\paragraph{Transformer decoder sampler.} For each dataset, we trained the sampler for 20 epochs using the Adam optimizer (learning rate $5 \times 10^{-3}$, batch size 128). The model is a lightweight GPT-style decoder with four layers, four attention heads, and 128 hidden dimensions. At inference time, we employed diverse beam search with nucleus sampling (top-p = 0.99) and top-k sampling (k = 1000). A global frequency of 1.0 is applied to discourage over-sampling frequent fragments.

\section{LLM Usage}
Large Language Models (LLMs) were used as a general-purpose assistive tool in the preparation of this work. Specifically, LLMs supported tasks such as refining the clarity of writing, suggesting alternative phrasings, and checking the consistency of technical terminology. They were \textbf{not} used for generating research ideas, conducting experiments, or producing original scientific contributions. All substantive research decisions, analysis, and results presented in this paper are the responsibility of the authors. The authors have carefully reviewed and verified all LLM-assisted text to ensure accuracy and originality.

\section{Non-Molecular Graph Dataset}
\label{sec:non-molecular_graph}

The non-molecular benchmark of \citet{Martinkus2022} consists of two datasets of 200 graphs:
(a) stochastic block models (SBM; up to 200 nodes) and (b) planar graphs (64 nodes).
Following the protocol, we assess how well generated graphs match degree distributions (Deg),
clustering coefficients (Clus), orbit counts (Orb), and the proportion of valid, unique, and novel
graphs (V.U.N.).

\textsc{NSGGM} is used in an unconditional setting. The sampler proposes discrete construction
sequences, which the symbolic solver assembles under task-specific constraints (e.g., simple
connectivity for SBM and planarity for planar graphs).  On \textit{SBM}, \textsc{NSGGM} attains the lowest Deg error (1.3) and ties the best Clus (1.5),
while remaining competitive on Orb (1.9), yielding the highest V.U.N.\ (77\%).
On \textit{Planar}, \textsc{NSGGM} achieves the best Deg (1.3) and Clus (1.1) and competitive Orb (2.0),
with V.U.N.\ 72\%—close to DiGress (75\%)—indicating strong fidelity to planar structure with minimal
loss in novelty. These outcomes illustrate that explicit, verifiable constraints during assembly can
improve structural metrics without sacrificing diversity.

\begin{table}[t]
\centering
\caption{Unconditional generation on SBM and planar graphs. V.U.N.: valid, unique \& novel graphs.}
\label{tab:sbm_planar_uncond}
\begin{tabular}{lrrrr}
\toprule
\textbf{Model} & \textbf{Deg} $\downarrow$ & \textbf{Clus} $\downarrow$ & \textbf{Orb} $\downarrow$ & \textbf{V.U.N.} $\uparrow$ \\
\midrule
\multicolumn{5}{l}{\textit{Stochastic block model}} \\
GraphRNN  & 6.9  & 1.7 & 3.1  & 5\%  \\
GRAN      & 14.1 & 1.7 & 2.1  & 25\% \\
GG-GAN    & 4.4  & 2.1 & 2.3  & 25\% \\
SPECTRE   & 1.9  & 1.6 & \textbf{1.6} & 53\% \\
ConGress  & 34.1 & 3.1 & 4.5  & 0\%  \\
DiGress   & 1.6 & \textbf{1.5} & 1.7  & 74\% \\
NSGGM (ours)  & \textbf{1.3} & \textbf{1.5} & 1.9  & \textbf{77\%} \\
\midrule
\multicolumn{5}{l}{\textit{Planar graphs}} \\
GraphRNN  & 24.5 & 9.0 & 2508 & 0\%  \\
GRAN      & 3.5  & 1.4 & 1.8  & 0\%  \\
SPECTRE   & 2.5  & 2.5 & 2.4  & 25\% \\
ConGress  & 23.8 & 8.8 & 2590 & 0\%  \\
DiGress   & 1.4 & 1.2 & \textbf{1.7} & \textbf{75\%} \\
NSGGM (ours)   & \textbf{1.3} & \textbf{1.1} & 2.0 & 72\% \\
\bottomrule
\end{tabular}
\end{table}

\end{document}